\theoremstyle{definition}
\newtheorem{definition}{Definition}[section]
\title{Learning Generalized Policy Automata  \\ for Relational Stochastic Shortest Path Problems}
\author{%
 Rushang Karia, Rashmeet Kaur Nayyar, Siddharth Srivastava \\
  School of Computing and Augmented Intelligence\\
  Arizona State University\\
  Tempe, AZ, USA 85281 \\
  \texttt{Rushang.Karia,rmnayyar,siddharths@asu.edu} \\
}
\newtheorem{theorem}{Theorem}[section]
\newtheorem{corollary}{Corollary}[section]
\begin{document}

\maketitle

\begin{abstract}
Several goal-oriented problems in the real-world can be naturally expressed as Stochastic Shortest Path Problems (SSPs). However, the computational complexity of solving SSPs makes finding solutions to even moderately sized problems intractable. Currently, existing state-of-the-art planners and heuristics often fail to exploit knowledge learned from solving other instances. This paper presents an approach for learning \emph{Generalized Policy Automata} (GPA): non-deterministic partial policies that can be used to catalyze the solution process. GPAs are learned using relational, feature-based abstractions, which makes them applicable on broad classes of related problems with different object names and quantities. Theoretical analysis of this approach shows that it guarantees completeness and hierarchical optimality. Empirical analysis shows that this approach effectively learns broadly applicable policy knowledge in a few-shot fashion and significantly outperforms state-of-the-art SSP solvers on test problems whose object counts are far greater than those used during training.
\end{abstract}

\section{Introduction}

Goal-oriented Markov Decision Processes (MDPs) expressed as Stochastic Shortest Path problems (SSPs) have been the subject of active research since they provide a convenient framework for modeling the uncertainty in action execution that often arises in the real-world. Recently, research in deep learning has demonstrated success in solving goal-oriented MDPs using image-based state representations \citep{tamar2016value,pong2018temporal,levy2017learning}. However, such methods require significant human-engineering effort in finding transformations like grayscale conversion, etc., to yield representations that facilitate learning. Many practical problems however, are more intuitively expressed using relational representations and have been widely studied in the literature.

As an example, consider a planetary rover whose mission is to collect all rocks of interest from a planet's surface and deliver them to the base for analysis. Such a problem objective is not easily described in an image-based representation (e.g., visibility is affected by line of sight) but can be easily described using a relational description language such as first-order logic.
Finding suitable image-based representations for such problems would be counter-productive and difficult. Furthermore, image-based deep learning methods often require large amounts of training data and/or are unable to provide guarantees of completeness and/or convergence.

Many real-world problems can be readily expressed as SSPs using symbolic descriptions that can be solved in polynomial time in terms of the state space. SSP algorithms use a combination of pruning strategies (e.g., heuristics \citep{hansen2001lao}) that can eliminate large parts of the search space from consideration, thereby reducing the computational effort expended. 
In spite of such optimizations, a major hurdle is the ``curse-of-dimensionality'' since the state spaces grow exponentially as the total number of objects increases. The pruning strategies employed by these SSP solvers do not scale well because they do not use knowledge that could have been exploited from solving similar problems. Existing SSP solvers would have difficulty scaling to rover problems with many locations and/or rocks. One solution to this problem is to compute a simple \emph{generalized policy}: move the rover to the closest available location with an interesting rock, try loading the rock until it succeeds, navigate back to the base, unload it, and re-iterate this process until all the interesting rocks are at the base. This generalized policy can be used to solve any rover instance with larger numbers of objects sharing a similar goal objective.

Related work in Generalized Planning addresses the problem of computing generalized policies by learning reliable controllers for broad classes of problems \citep{DBLP:conf/aaai/SrivastavaIZ08,DBLP:conf/aips/BonetPG09,DBLP:conf/ijcai/AguasCJ16}.
More recently, deep learning based approaches have been demonstrated to successfully learn generalized policies \citep{DBLP:conf/aaai/ToyerTTX18,groshev2018learning,DBLP:conf/icml/GargBM20}. A key limitation of these approaches is the lack of any theoretical guarantees of finding a solution or optimality. In this paper, we show that such policies can be learned with guarantees of completeness and hierarchical optimality using solutions of very few, \emph{small} problems with few objects.

The primary contribution of this paper is a novel approach
for few-shot learning of \emph{Generalized Policy Automata} (GPAs)
using solutions of SSP instances with small object counts. GPAs are non-deterministic partial policies that represent generalized knowledge that can be applied to problems with different object names and larger object counts. 
This process uses logical feature-based abstractions to lift instance-specific information like object names and counts while preserving the relationships between objects in a way that can be used to express generalized knowledge. 
GPAs learned using our approach can be used to \emph{accelerate any model-based SSP solver} by pruning out large sets of actions in different, related, but larger SSPs.
We prove that our approach is complete and guarantees hierarchical optimality. 
Empirical analysis on a range of well-known benchmark domains shows that our approach few-shot learns GPAs using as few as 3 training problem instances and convincingly outperforms existing state-of-the-art SSP solvers and does so without compromising the quality of the solutions found.

The rest of this paper is organized as follows: The next section provides the necessary background. Sec.\,\ref{sec:our_approach} describes our approach for using example policies in conjunction with abstractions to learn GPAs and use them for solving SSPs. We present our experimental setup and discuss obtained results in Sec.\,\ref{sec:experiments}. Sec.\,\ref{sec:related_work} provides a description of related work in the area. Finally, Sec.\,\ref{sec:conclusions} states the conclusions that we draw upon from this work followed by a brief description of future work.

\section{Background}
\label{sec:background}
Our problem setting considers SSPs expressed in a symbolic description language such as the Probabilistic Planning Domain Definition Language (PPDDL)  \citep{DBLP:journals/jair/YounesLWA05}. Let $\mathcal{D} = \langle \mathcal{P}, \mathcal{A} \rangle$ be a problem \emph{domain} where $\mathcal{P}$ and $\mathcal{A}$ are finite sets of predicates and parameterized actions. Object \emph{types}, such as those used in PPDDL, can be equivalently represented using unary predicates. A relational SSP problem instance for a domain $\mathcal{D}$ with a goal formula $g$ over $\mathcal{P}$ and a finite set of objects $O$ is defined as a tuple $P = \langle O, S, A, s_0, g, T, C \rangle$. A fact is the instantiation of a predicate $p \in \mathcal{P}$ with the 
appropriate number of objects from $O$. A state $s$ is a set of true facts and the state space $S$ is defined as all possible sets of true facts derived using $\mathcal{D}$ and $O$. Similarly, the action space $A$ is instantiated using $\mathcal{A}$ and $O$. $T : S \times A \times S' \rightarrow [0, 1]$ is the transition function and $C: S \times A \times S \rightarrow \mathbb{R}^{+}$ is the cost function. An entry $t(s, a, s') \in T$ defines the probability of executing action $a$ in a state $s$ and ending up in a state $s'$ where $a \in A$, $s, s' \in S$, and $c(s, a, s') \in C$ indicates the cost incurred while doing so. Naturally,  $\sum_{s'}t(s, a, s') = 1$ for any $s \in S$ and $a \in A$. Note that $a$ refers to the instantiated action $a(o_1, \ldots, o_n)$, where $o_1, \ldots, o_n \in O$ are the action \emph{parameters}. We omit the parameters when it is clear from context. $s_0 \in S$ is a \emph{known} initial state. A goal state $s_g$ is a state s.t. $s_g \models g$. $c(s_g, a, s_g) = 0$ and $t(s_g, a, s_g) = 1$ for all such goal states for any action $a$. 
Additionally, termination (reaching a state s.t. $s \models g$) in an SSP is inevitable making the length of the horizon unknown but finite \citep{DBLP:books/lib/BertsekasT96}.


\textbf{Running example:} The planetary rover example can be expressed using a domain that consists of parameterized predicates \emph{connected}$(l_x, l_y)$, \emph{in-rover}$(r_x)$, \emph{rock-at}$(r_x, l_x)$, and actions load$(r_x, l_x)$, unload$(r_x, l_x)$, and move$(l_x,l_y)$. Object types can be denoted using unary predicates \emph{location}$(l_x)$ and \emph{rock}$(r_x)$. $l_x$, $l_y$, and $r_x$ are \emph{parameters} that can be instantiated with different locations and rocks, allowing an easy way to express different problems. Actions dynamics are described using closed-form probability distributions (e.g. loading a rock could be modeled so that the rover picks up the rock with a probability of 0.8) and this forms the transition function.  A simplified SSP problem that ignores connectivity and consists of two locations, a base location, and two rocks can be described using a set of objects $O = \{ l_1, l_2, \emph{base}, r_1, r_2 \}$. A state in this SSP $s_\emph{eg}$ that describes the situation where $r_2$ is being carried by the rover and $r_1$ is at $l_2$ can be written as 
$s_\emph{eg} = \{ 
    \emph{location}(l_1),
    \emph{location}(l_2),
    \emph{location}(\emph{base}),
    \emph{rock}(r_1),
    \emph{rock}(r_2),
    \emph{in-rover}(r_2),
    \emph{rock-at}(r_1, l_2)
\}$. The goal of delivering all the rocks to the base can be expressed as $\forall x \text{ } \emph{rock}(x) \implies \emph{rock-at}(x, \emph{base})$. Executing any action can be assumed to expend some fuel and as a result, the objective is to deliver all the rocks to the base in a way that minimizes the total fuel expended.


A solution to an SSP is a deterministic policy $\pi : S \rightarrow A$ which is a mapping from states to actions. A \emph{proper policy} is one that is well-defined for all states. A \emph{complete proper policy} is one for which the goal is guaranteed to be reachable from all possible states. By definition, SSPs must have at least one \emph{complete} proper policy \citep{DBLP:books/lib/BertsekasT96}. This can be overly limiting in practice since such a formulation does not model dead end states: states from which the goal is reachable with probability 0. A weaker formulation of an SSP stipulates that the goal must be reachable with a probability of 1 from $s_0$ i.e. whose solution is a \emph{partial} proper policy from $s_0$ that is defined for every reachable state from $s_0$. To use such a formulation, we focus on a broader class of relaxed SSPs called Generalized SSPs \citep{DBLP:conf/uai/KolobovMW12} that allow the presence of dead-end states and only require the existence of at least one \emph{partial} proper policy from $s_0$.
Henceforth, we use the term SSPs to refer to Generalized SSPs and focus only on \emph{partial} proper policies.

The value of a state $s$ when using a policy $\pi$ is the expected cost of executing $\pi(s)$ when starting in $s$ and following $\pi$ thereafter: $V^\pi(s) = \sum_{s' \in S}t(s, \pi(s), s')[c(s, \pi(s), s') +  V^\pi(s')]$ \citep{DBLP:books/lib/SuttonB98}.
$V$ is known as the value function. The optimal policy $\pi^*$ is a policy that is better than or equal to all other policies. $V^*$ is the optimal value function corresponding to $\pi^{*}$. $V^*$ and consequently $\pi^{*}$ can be computed by iteratively applying the \emph{Bellman optimality equations}:
\begin{align}
\label{eqn:bellman_optimality}
V^{*}(s) &= \min_{a \in A}  \sum_{s' \in S} t(s, a, s') [c(s, a, s') +  V^{*}(s')]
\end{align}
SSP solvers iteratively apply Eq.\,\ref{eqn:bellman_optimality} starting from $s_0$ to compute a policy, and under certain conditions, have been proved to converge to a policy that is $\epsilon$-consistent with $\pi^*$ \citep{hansen2001lao, DBLP:conf/aips/BonetG03}. 


Let $F_\alpha$ and $F_\beta$ be two sets of features. We use feature-based abstractions to lift problem-specific characteristics like object names and numbers in order to facilitate the learning of generalized knowledge that can be applied to problems irrespective of differences in such characteristics.
We define \emph{state abstraction} as a function $\alpha : F_\alpha, S_P \rightarrow \overline{S}$ that transforms the concrete state space $S_P$ for an SSP $P$ into a finite abstract state space $\overline{S}$. Similarly, \emph{action abstraction}
$\beta : F_\beta, S_P, A_P \rightarrow \overline{A}$ transforms the action space to a finite abstract action space. Typically, $|\overline{S}| \le |S_P|$ and $|\overline{A}| \le |A_P|$. We use $\overline{s} = \alpha(F_\alpha, s)$ and $\overline{a} = \beta(F_\beta, s, a)$ to represent abstractions of a concrete state $s$ and action $a$. In this paper, we utilize feature sets automatically derived using canonical abstraction \citep{DBLP:journals/toplas/SagivRW02} to compute such feature-based representations of $s$ and $a$. 
This is described in Sec.\,\ref{subsec:canonical_abstraction}. 

\section{Our Approach}
\label{sec:our_approach}

Our objective is to exploit knowledge from solutions of SSP instances with small object counts to learn Generalized Policy Automata (GPAs) that allow effective pruning of the search space of related SSPs with larger object counts. We accomplish this by using solutions to a small set of training instances that are easily solvable using existing SSP solvers, and using feature-based canonical abstractions to learn a GPA that encodes generalized partial policies and serves as a guide to prune the set of policies under consideration. We provide a brief description of canonical abstraction in Sec.\,\ref{subsec:canonical_abstraction},  define GPAs in
 Sec.\,\ref{subsec:gpga_automaton}, and describe our process to learn a GPA in Sec.\,\ref{subsec:learning_gpga}. We then describe our method (Alg.\,\ref{alg:convert_ssp}) for solving SSPs in Sec.\,\ref{subsec:solving_ssp}.

\begin{figure}[t]
\centering
\includegraphics[scale=0.45]{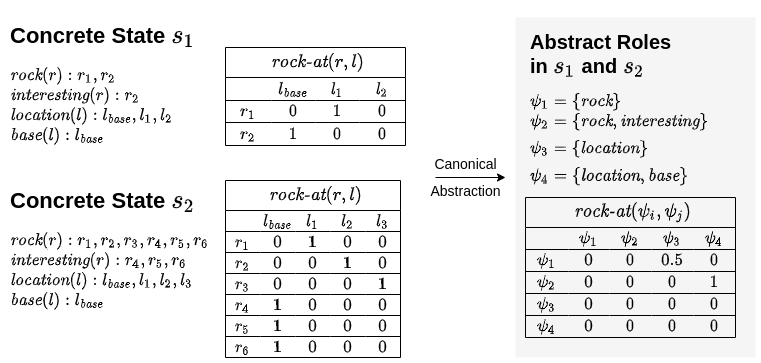}
\caption{An example of how canonical abstraction can be used to lift problem-specific characteristics like object names and numbers. $s_1$ and $s_2$ are example states of two \emph{different} problems.}
    \label{fig:abstraction_example}
\end{figure}

\subsection{Canonical Abstraction}
\label{subsec:canonical_abstraction}

Canonical abstractions, commonly used in program analysis, have been shown to be useful in generalized planning \citep{DBLP:journals/ai/SrivastavaIZ11,DBLP:conf/aaai/KariaS21} by allowing objects in a state to be grouped together into roles. Given a concrete state $s$ and an object $o$, the set of unary predicates that object $o$ satisfies is known as the \emph{role} of $o$.0-ary predicates are represented as unary predicates with a default ``phantom'' object. Multiple objects can map to the same role.

Let $\psi$ be a role, then, we define $\varphi_{\psi}(s)$ as a function that returns the set of objects that map to $\psi$ in a concrete state $s$. Similarly, for any given predicate $p_n \in \mathcal{P}$ where $n$ is the arity, $\varphi_{p_n}(\psi_{1}, \ldots, \psi_{n})$ is defined as the set of all $n$-ary predicates in $s$ that are consistent with the roles composing the predicate $p_{n}(\psi_{1}, \ldots, \psi_{n})$, i.e.,
$\varphi_{p_n(\psi_{1}, \ldots, \psi_{n})}(s) = \{ p_n(o_1, \ldots, o_n) | p_n(o_1, \ldots, o_n) \in s, o_i \in \varphi_{\psi_{i}(s)} \}$.

The value of a role $\psi$ in a concrete state $s$ is given as $\max(2, |\varphi_\psi(s)|)$ to indicate whether there are 0, 1, or greater than 1 objects satisfying the role. Since relations between objects become imprecise when grouped as roles, the value of a predicate $p_n(\psi_1, \ldots, \psi_n)$ in $s$ is determined using three-valued logic and is represented as $0$ if $\varphi_{p_n(\psi_{1}, \ldots, \psi_{n})}(s) = \{ \}$, as $1$ if $|\varphi_{p_n(\psi_{1}, \ldots, \psi_{n})}(s)| = |\varphi_{\psi_{1}}(s)| \times \ldots \times |\varphi_{\psi_{n}}(s)|$, and $\frac{1}{2}$ otherwise.

Let $\Psi$ be the set of all possible roles and $\mathcal{P}_i$ be the set of names of all predicates $p \in \mathcal{P}$ of arity $i$ for a domain $D$, then $\overline{\mathcal{P}}_i = \mathcal{P}_{i} \times [\Psi]^i$ is the set of all possible relations of arity $i$ between roles. We define the feature set for state abstraction as $F_\alpha = \Psi \cup_{i=2}^{N} \overline{\mathcal{P}}_i$ where $N$ is the maximum arity of any predicate in $D$. We define state abstraction $\alpha(F_\alpha, s)$ for a given concrete state $s$ to return an abstract state $\overline{s}$ as a total valuation of $F_\alpha$ using the process described above. Similarly, we define the feature set for action abstraction as $F_\beta = \Psi$. The action abstraction $\beta(F_\beta, s, a)$ for a concrete action $a(o_1, \ldots, o_n)$ when applied to $s$ returns an abstract action $\overline{a}(\psi_1, \ldots, \psi_n)$ where $\overline{a} \equiv a$ and $\psi_i$ is the role that object $o_i$ satisfies, i.e., $o_i \in \varphi_{\psi_{i}}(s)$ for $\psi_i \in \Psi$.

Fig.\,\ref{fig:abstraction_example} provides an intuitive example of how canonical abstraction can be used to lift instance-specific information like object quantities and object names. The figure describes two concrete states $s_1$ and $s_2$ from two different problems. The former contains 2 rocks and 3 locations whereas the latter contains 6 rocks and 4 locations. There are four distinct roles in $s_1$ and $s_2$. The abstract relation $\textit{rock-at}(\psi_i, \psi_j)$ provides the three-valued representation of values between different roles. For example, $\textit{rock-at}(\psi_1, \psi_3)$ interprets as \emph{the set of rocks that are at some location}, while $\textit{rock-at}(\psi_2, \psi_4)$ interprets as \emph{the set of interesting rocks that are at the base}.

It is easy to see that $r_1$ maps to the the role for a \emph{rock}, $\psi_1$, and  $r_2$ maps to the role for an \emph{interesting rock}, $\psi_2$. Since $\textit{rock-at}(r_1, l_2)$ does not appear in $s_1$, $\textit{rock-at}(\psi_1, \psi_3)$ evaluates to $0.5$ to indicate that there are \emph{some} objects of role \emph{rock} that are at some, but not all, locations. Similarly, since all \emph{interesting rocks} are at the base location, $\textit{rock-at}(\psi_2, \psi_4)$ evaluates to 1.

The key aspect of abstraction comes from the observation that the abstraction relation for $s_2$ remains the same even though $s_2$ has many more objects than $s_1$. The same high-level interpretations of the relations are captured while lifting low-level information like object names and numbers.

\subsection{Generalized Policy Automata}
\label{subsec:gpga_automaton}

We introduce Generalized Policy Automata (GPAs) as compact and expressive non-deterministic finite-state automata that encode generalized knowledge and can be represented as directed hypergraphs. GPAs impose hierarchical constraints on the state space of an SSP and prune the action space under consideration, thus reducing the computational effort of solving larger related SSP instances. 


\begin{wrapfigure}[18]{r}{0.62\textwidth}
\vspace{-2.5em}
\includegraphics[scale=0.55]{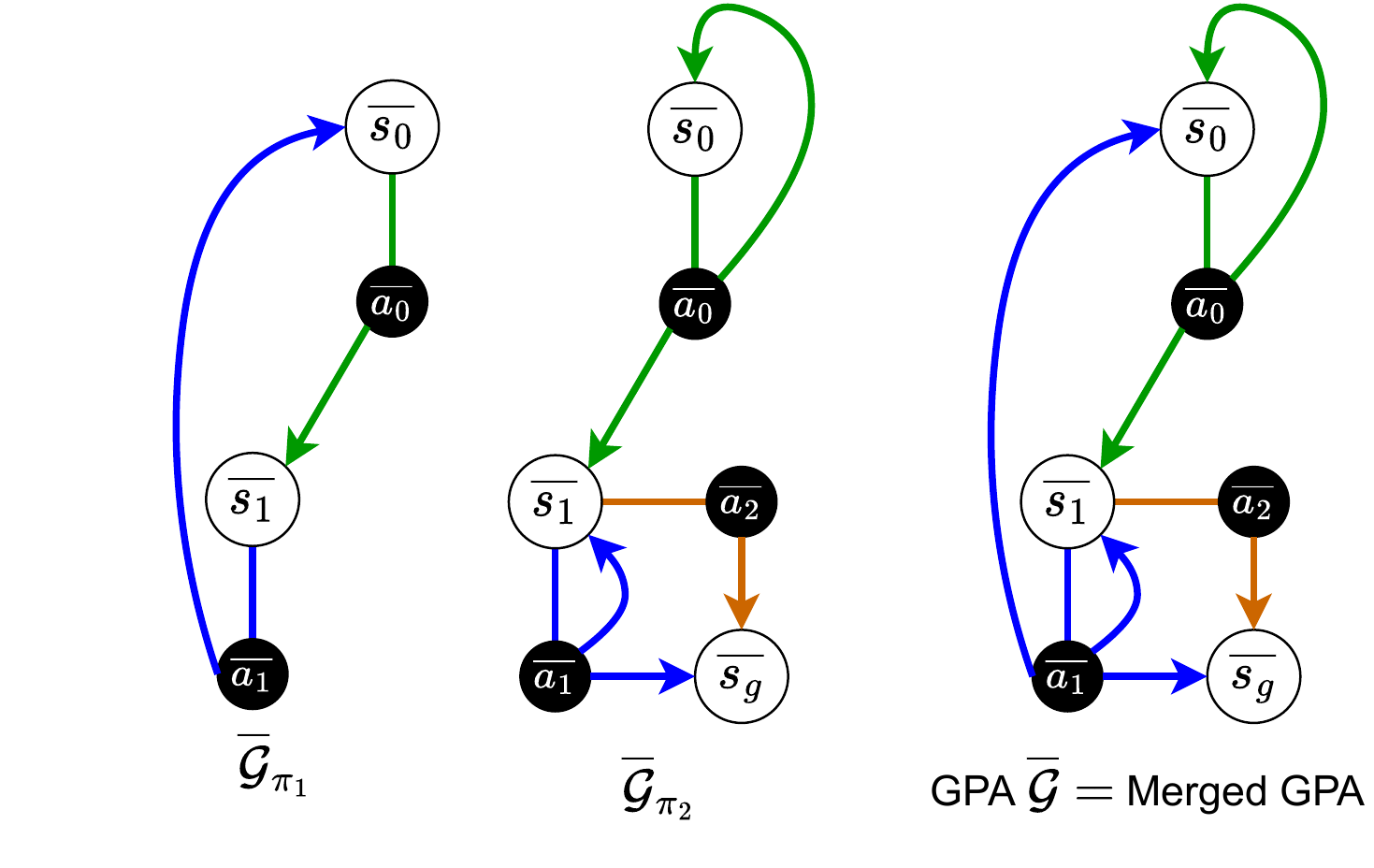}
\caption{A high-level overview of how we merge different GPAs. All edges with the same color represent a hyperedge. For example, the blue colored hyperedge in $\overline{\mathcal{G}}$ is $\langle \overline{s_1}, \{ \overline{s_0}, \overline{s_1}, \overline{s_g} \}, \overline{a_1} \rangle$.}
\label{fig:merge_gpas}

\end{wrapfigure}

\begin{definition}[Generalized Policy Automaton]

Let $\overline{S}$ and $\overline{A}$ be a set of \emph{abstract} states and actions.
A Generalized Policy Automaton (GPA) $\overline{\mathcal{G}} = \langle \overline{\mathcal{V}}, \overline{\mathcal{E}}\rangle$ is a non-deterministic finite-state automaton that can be represented as a directed hypergraph where $\overline{\mathcal{V}} = \overline{S}$and $\overline{\mathcal{E}} \subseteq \overline{\mathcal{V}} \times \mathbb{P}(\overline{\mathcal{V}}) \setminus \varnothing \times \overline{A}$ where $\mathbb{P}(\overline{\mathcal{V}})$ is the powerset of $\overline{\mathcal{V}}$ is the set of directed hyperedges s.t. each hyperedge $\overline{e} \in \overline{E}$ is a tuple $( \overline{e}_{\emph{src}}, \overline{e}_{\emph{dest}}, \overline{e}_{\emph{act}} )$ representing a start vertex, a set of result vertices, and an action label.
\end{definition}

\begin{definition}[GPA Consistent Policy]

A policy $\pi$ is defined to be consistent with a GPA $\overline{\mathcal{G}}$ iff for any concrete states $s, s' \in S$ and any concrete action $a \in A$ whenever $\pi(s) = a$ and $t(s, a, s') > 0$ there exists a hyperedge $\overline{e} \equiv \langle e_\emph{src}, e_\emph{dest}, e_\emph{act} \rangle \in \overline{\mathcal{E}}$ where $\overline{s}  = \overline{e}_\emph{src}$, $\overline{s}' \in \overline{e}_\emph{dest}$, and $\overline{a} = \overline{e}_\emph{act}$. 
\end{definition}

\subsubsection{Learning GPAs}
\label{subsec:learning_gpga}

It is well-known that solutions to small problems can be used to construct generalized control structures that can assist in solving larger problems. We adopt a similar strategy of the learn-from-small-examples approach \citep{DBLP:conf/aips/WuG07,DBLP:conf/aaai/KariaS21} and 
compute GPAs iteratively from a small training set containing solutions of similar SSP instances as outlined below.

To form our training set, we use a library of solution policies $\Pi = \{ \pi_1, \ldots, \pi_n\}$ for \emph{small} SSPs $P_1, \ldots, P_n$ that can be easily (and optimally) computed by existing state-of-art SSP solvers. We use the transition function for $P_i$ to convert each policy $\pi_i \in \Pi$ to a set of transitions $\tau_i = \{ (s, a, s') | s, s' \in S_i, a \in A_i, \pi(s) = a, t(s, a, s') > 0\}$. We then construct our training set $\mathcal{T} = \{ \tau_1, \ldots, \tau_n \}$ as a set of concrete transitions.

The GPA $\overline{\mathcal{G}}_\Pi$ learned from a finite set of concrete transitions $\mathcal{T}$ is defined as follows.
We first initialize a GPA $\overline{\mathcal{G}}_\Pi = \langle \{ \overline{\mathcal{V}}, \overline{\mathcal{E}} \} \rangle = \langle \{\}, \{\} \rangle$.
Next, we convert $\mathcal{T}$ into an abstract transition set $\overline{\mathcal{T}} = \{ (\alpha(F_\alpha, s), \beta(F_\beta, s, a), \alpha(F_\alpha, s')) | (s, a, s') \in \mathcal{T} \}$. We then form the vertex set by using all abstract states in $\overline{\mathcal{T}}$, i.e., $\overline{\mathcal{V}} = \overline{\mathcal{V}} \cup \{ \overline{s}, \overline{s}'\}$ for every $(\overline{s}, \overline{a}, \overline{s}') \in \overline{\mathcal{T}}$. Similarly, we convert each abstract transition into a hyperedge and add it to the edge set, i.e. $\overline{\mathcal{E}} = \overline{\mathcal{E}} \cup \langle \overline{s}, \{ \overline{s}' \}, \overline{a} \rangle$ for every $(\overline{s}, \overline{a}, \overline{s}') \in \overline{\mathcal{T}}$. Once this is done, we compress $\overline{\mathcal{G}}_\Pi$ by replacing any edges in $\overline{\mathcal{E}}$ that have the same start nodes and labels but different destinations with a single new edge that combines the destinations of the edges, i.e., for any two edges $\overline{e}^1, \overline{e}^2 \in \overline{\mathcal{E}}$ s.t. $\overline{e}^1_\emph{src} = \overline{e}^2_\emph{src}$, $\overline{e}^1_\emph{dest} \not= \overline{e}^2_\emph{dest}$, and $\overline{e}^1_\emph{act} = \overline{e}^2_\emph{act}$, $\overline{\mathcal{E}} = \overline{\mathcal{E}} \setminus \{ \overline{e}^1, \overline{e}^2 \} \cup \langle \overline{e}^1_\emph{src}, \overline{e}^1_\emph{dest} \cup \overline{e}^2_\emph{dest}, \overline{e}^1_\emph{act} \rangle$. Fig.\,\ref{fig:merge_gpas} provides a high-level overview of our procedure for merging GPAs. Henceforth, we drop the subscript $\Pi$ from $\overline{\mathcal{G}}_\Pi$ when it is clear from context.


\begin{algorithm}[t]

\begin{algorithmic}[1]
\REQUIRE SSP $P = \langle O, S, A, s_0, g, T, C \rangle$, GPA $\overline{\mathcal{G}} = \langle \overline{\mathcal{V}}, \overline{\mathcal{E}} \rangle$,  \\ Feature Sets $F_\alpha, F_\beta$,  Abstraction Functions $\alpha, \beta$

\STATE $C' = C$ \COMMENT{copy over the cost function of $P$}
\FOR {$(s, a, s') \in S \times A \times S$}
    \STATE $\overline{s}, \overline{a}, \overline{s'} \gets \alpha(F_\alpha, s), \beta(F_\beta, s, a), \alpha(F_\alpha, s')$
    \IF {there is no edge $\overline{e} \in \overline{\mathcal{E}}$ s.t. $\overline{e}_\emph{src} = \overline{s}, \overline{s'} \in \overline{e}_\emph{dest}$, and  $\overline{e}_\emph{act} = \overline{a}$}
        \STATE $C'[s, a, s'] = \infty$
    \ENDIF
\ENDFOR

\STATE $P|_{\overline{\mathcal{G}}} = \langle O, S, A, s_0, g, T, C' \rangle$
\STATE $V^{*}_{P|_{\overline{\mathcal{G}}}} \gets \text{initializeValueFunction}()$
\STATE $V^{*}_{P|_{\overline{\mathcal{G}}}}, \pi^{*}_{P|_{\overline{\mathcal{G}}}} \gets \text{optimallySolveSSP}(P|_{\overline{\mathcal{G}}}, V^{*}_{P|_{\overline{\mathcal{G}}}})$

\IF{$\pi^{*}_{P|_{\overline{\mathcal{G}}}}$ is a partial proper policy}
    \RETURN $\pi^{*}_{P|_{\overline{\mathcal{G}}}}$
\ELSE
    \STATE $V_{P}, \pi_{P} \gets \text{optimallySolveSSP}(P, V^{*}_{P|_{\overline{\mathcal{G}}}})$
    \RETURN $\pi_{P}$
\ENDIF
\end{algorithmic}
\caption{GPA acceleration for SSPs}
\label{alg:convert_ssp}
\end{algorithm}

\subsection{Solving SSPs using GPAs}
\label{subsec:solving_ssp}

The key intuition behind our method is to use the GPA to prune transitions that are not consistent with the GPA from the search process. We accomplish this by solving a GPA constrained problem that encodes the constrained encoded by the GPA. We define a GPA constrained problem as follows:
\begin{definition}[GPA constrained problem]

Let $P = \langle O, S, A, T, C, s_0, g \rangle$ be an SSP for a domain $D$ and let $\overline{\mathcal{G}} = \langle \overline{\mathcal{V}}, \overline{\mathcal{E}} \rangle$ be a GPA. A GPA constrained problem $P|_{\overline{\mathcal{G}}} = \langle O, S, A, T, C', s_0, g \rangle$ is defined with a cost function $C': S \times A \times S \rightarrow \mathbb{R}^+$ such that $C'[s, a, s'] = C[s, a, s']$ when there exists a hyperedge $\overline{e} \equiv \langle e_\emph{src}, e_\emph{dest}, e_\emph{act} \rangle \in \overline{\mathcal{E}}$ where $\overline{s}  = \overline{e}_\emph{src}$, $\overline{s}' \in \overline{e}_\emph{dest}$, and $\overline{a} = \overline{e}_\emph{act}$ and $C'[s, a, s'] = \infty$ if there is no such hyperedge.
\end{definition}

Given a GPA $\overline{\mathcal{G}}$, Alg.\,\ref{alg:convert_ssp} works as follows. Lines 1-8 create a GPA constrained problem $P|_{\overline{\mathcal{G}}}$. Next, line 10 optimally solves $P|_{\overline{\mathcal{G}}}$ using any off-the-shelf SSP solver using a randomly initialized value function (line 9). If the computed policy $\pi^{*}_{P|_{\overline{\mathcal{G}}}}$ is a partial proper policy then it is returned immediately (lines 11-12) else Alg.\,\ref{alg:convert_ssp} use a new instance of the SSP solver to compute a policy for $P$ using $V^{*}_{P|_{\overline{\mathcal{G}}}}$ as the \emph{bootstrapped} initial value estimates. Information such as whether a state is a dead end etc., is not carried over. Line 15 then optimally solves $P$ using $V^{*}_{P|_{\overline{\mathcal{G}}}}$ as the initial value estimates and returns a partial proper policy $\pi_P$ for $P$. Since $V^{*}_{P|_{\overline{\mathcal{G}}}}$ is only used as an initial bootstrapping estimate for $P$, an SSP solver will only return a $\pi_P$ is better than or equal to $\pi^{*}_{P|_{\overline{\mathcal{G}}}}$ following standard results on policy improvement for value iteration \cite{DBLP:books/lib/SuttonB98}. 

The goal of modifying the cost function is to prevent concrete transitions whose abstract translations are not present in the GPA to be used when performing Bellman updates for $P|_{\overline{\mathcal{G}}}$. As a result, actions belonging to such transitions cannot appear in $\pi^{*}_{P|_{\overline{\mathcal{G}}}}$ since their costs would be $\infty$. As a consequence of doing so, the existence of a partial proper policy is not guaranteed in $P|_{\overline{\mathcal{G}}}$. 

Every optimal policy $\pi^{*}_{P|_{\overline{\mathcal{G}}}}$ for $P|_{\overline{\mathcal{G}}}$ is said to be \emph{hierarchically optimal} for $P$ given $\overline{\mathcal{G}}$. Alg.\,\ref{alg:convert_ssp} computes an optimal policy $\pi^{*}_{P|_{\overline{\mathcal{G}}}}$ for $P|_{\overline{\mathcal{G}}}$ in the space of cross-product of the states of the GPA $\overline{\mathcal{G}}$ with the states of the SSP $P$, similar to HAMs \citep{DBLP:conf/nips/ParrR97}. As seen in our empirical analysis in Sec.\,\ref{sec:experiments}, we observe that using a small set of example policies that are sufficient to capture rich generalized control structures that are encoded by such hierarchically optimal policies. Alg.\,\ref{alg:convert_ssp} computes such policies within a fraction of the original computational effort and in most cases with costs comparable to $\pi^*_P$. Alg.\,\ref{alg:convert_ssp} is complete in that it will always return a partial proper policy for an SSP $P$. We now state and prove some theoretical guarantees of completeness and hierarchical optimality below.\footnote{Complete proofs are available in the extended version \citep{arXiv:kns_neurips22}.}

The following result indicates that Alg.\,\ref{alg:convert_ssp} is a complete algorithm for solving SSPs.
\begin{theorem}
\label{theorem:completeness}
Given a GPA $\overline{\mathcal{G}}$ and an SSP $P$, Alg.\,\ref{alg:convert_ssp} always returns a partial proper policy.
\end{theorem}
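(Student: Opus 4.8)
The plan is to prove completeness by a case analysis on the branch of Alg.\,\ref{alg:convert_ssp} that returns the policy. There are exactly two return points: line 12 returns $\pi^{*}_{P|_{\overline{\mathcal{G}}}}$ when it happens to already be a partial proper policy, and line 15 returns $\pi_P$ computed by solving the unconstrained $P$. In the first case the conclusion is immediate by the guarding \textbf{if} condition, so all the real work is in showing that the second branch is reached whenever the first fails and that it always produces a partial proper policy.

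First I would recall the structural fact, stated in the background, that every (Generalized) SSP $P$ is guaranteed to admit at least one partial proper policy from $s_0$, and that an off-the-shelf SSP solver (line 15, \emph{optimallySolveSSP} on $P$) is sound and will return such a policy. The crucial observation is that the constrained problem $P|_{\overline{\mathcal{G}}}$ differs from $P$ \emph{only} in its cost function $C'$: it shares the same objects, states, actions, initial state, goal, and transition function $T$. Since $C'$ assigns either the original finite cost or $\infty$ but never alters reachability under $T$, the set of states reachable from $s_0$ and the existence of proper policies in the underlying dynamics are unchanged. Hence line 15 is invoked on a genuine SSP that still satisfies the Generalized-SSP hypothesis, and the solver is guaranteed to return a partial proper policy $\pi_P$.

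Next I would verify that the algorithm always halts at one of these two return statements — i.e., that there is no silent failure path. The loop on lines 2--7 terminates because $S \times A \times S$ is finite (the state and action spaces are finite by construction from $\mathcal{D}$ and $O$). The construction of $P|_{\overline{\mathcal{G}}}$ on line 8 and the solver call on line 10 terminate under the standard assumption that \emph{optimallySolveSSP} converges. The branch on line 11 partitions into exactly the two cases above, so control necessarily reaches either line 12 or line 15. I would then stitch the cases together: if $\pi^{*}_{P|_{\overline{\mathcal{G}}}}$ is partial proper, return it (line 12); otherwise fall through to line 15 and return the solver's output on $P$, which is partial proper by the argument above. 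In either case the returned object is a partial proper policy, establishing the claim.

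The main obstacle I anticipate is the second case, and specifically justifying that setting costs to $\infty$ does not secretly make the fall-through problem $P$ unsolvable or introduce dead ends that destroy the Generalized-SSP guarantee. The key point to nail down is that the $\infty$-costs live only in $C'$ (used for $P|_{\overline{\mathcal{G}}}$), whereas line 15 solves the \emph{original} $P$ with its unmodified cost $C$ — only the value function $V^{*}_{P|_{\overline{\mathcal{G}}}}$ is carried over, and purely as a bootstrapped initial estimate, not as a constraint (as the text emphasizes, dead-end information is explicitly not carried over). I would therefore be careful to state that value-iteration-style solvers are insensitive to the initialization for the purpose of soundness, so bootstrapping from $V^{*}_{P|_{\overline{\mathcal{G}}}}$ cannot prevent convergence to a partial proper policy for $P$; it only affects efficiency. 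Making this independence explicit is the subtle step, since a careless reading might worry that an initialization inherited from an infeasible constrained problem could bias or break the final solve.
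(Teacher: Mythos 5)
Your proposal is correct and follows essentially the same argument as the paper's proof: a case split on the two return points, with the second case resting on the fact that a Generalized SSP admits a partial proper policy by definition, so the solver on the unmodified $P$ must find one. Your additional care about the bootstrapped value function and the $\infty$-costs living only in $C'$ fills in a detail the paper leaves implicit, but it does not change the structure of the argument.
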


\begin{proof}[Proof]
Alg.\,\ref{alg:convert_ssp} returns $\pi^{*}_{P|_{\overline{\mathcal{G}}}}$ iff it is a partial proper policy (line 11) else $V^{*}_{P|_{\overline{\mathcal{G}}}}(s_0) = \infty$ and Alg.\,\ref{alg:convert_ssp} returns a policy for $\pi_P$ by solving $P$ using initial value estimates from $V^{*}_{P|_{\overline{\mathcal{G}}}}$. Since the existence of a partial proper policy in $P$ is guaranteed by definition, Alg.\,\ref{alg:convert_ssp} will find it (line 14) and return it.
\end{proof}

The following result indicates that the output of Alg.\,\ref{alg:convert_ssp} is hierarchically optimal or better.

\begin{theorem}
\label{theorem:quality_of_policy}
Let $V^{\pi}$ be the value function for a policy $\pi$ returned by Alg.\,\ref{alg:convert_ssp} for an SSP $P$ using GPA $\overline{\mathcal{G}}$. Let $V^{*}_{P|_{\overline{\mathcal{G}}}}$ be the optimal value function for $P|_{\overline{\mathcal{G}}}$, then $V^{\pi}(s_0) \le V^{*}_{P|_{\overline{\mathcal{G}}}}(s_0)$.
\end{theorem}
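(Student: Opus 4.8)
The plan is to split on the two branches of Alg.\,\ref{alg:convert_ssp} and show the inequality holds in each, leveraging the fact (Theorem~\ref{theorem:completeness}) that the returned policy $\pi$ is always a partial proper policy for $P$ and hence has finite value at $s_0$.

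First I would handle the branch in which $\pi^{*}_{P|_{\overline{\mathcal{G}}}}$ is already a partial proper policy (lines 11--12), so that the returned $\pi$ equals $\pi^{*}_{P|_{\overline{\mathcal{G}}}}$. Here I claim the value of $\pi$ measured under the original cost $C$ equals its value under $C'$, i.e.\ $V^{\pi}(s_0) = V^{*}_{P|_{\overline{\mathcal{G}}}}(s_0)$. The key observation is that a proper policy with finite value at $s_0$ cannot, from any state reachable under $\pi$ from $s_0$, place positive probability on a transition whose cost is $\infty$; otherwise the expected accumulated cost would diverge. By the definition of $P|_{\overline{\mathcal{G}}}$, the only transitions with $C' = \infty$ are exactly those not consistent with any hyperedge of $\overline{\mathcal{G}}$, and on every other transition $C' = C$. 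Hence along all trajectories that $\pi$ actually induces from $s_0$, the two cost functions coincide, so the policy-evaluation (Bellman) equations for $\pi$ under $C$ and under $C'$ admit the same solution at $s_0$. This yields equality, which certainly implies the desired $\le$.

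Next I would treat the complementary branch (lines 14--15), where $\pi^{*}_{P|_{\overline{\mathcal{G}}}}$ is not a partial proper policy. Since $\pi^{*}_{P|_{\overline{\mathcal{G}}}}$ is optimal for $P|_{\overline{\mathcal{G}}}$ yet improper, its optimal value at $s_0$ must be infinite, i.e.\ $V^{*}_{P|_{\overline{\mathcal{G}}}}(s_0) = \infty$. By Theorem~\ref{theorem:completeness} the policy $\pi = \pi_P$ returned in this branch is a partial proper policy for $P$, so $V^{\pi}(s_0) < \infty$. Therefore $V^{\pi}(s_0) < \infty = V^{*}_{P|_{\overline{\mathcal{G}}}}(s_0)$, and the inequality holds trivially. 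Note the bootstrapping from $V^{*}_{P|_{\overline{\mathcal{G}}}}$ does not affect optimality of the solver's output for $P$; it only seeds the value iteration, so this branch returns an optimal (hence proper and finite-valued) policy for $P$.

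I expect the first branch to be the main obstacle: the crux is the claim that an optimal proper policy for $P|_{\overline{\mathcal{G}}}$ with finite value never traverses a $C'=\infty$ transition on its reachable set, so that evaluating it under $C$ versus $C'$ gives identical values at $s_0$. Making this rigorous requires a short argument about the reachable set under $\pi$ and the contribution of any positive-probability infinite-cost transition to the expected cost, combined with the fact that $C$ and $C'$ agree off the blocked (GPA-inconsistent) transitions. Once that equality is established, combining the two branches gives $V^{\pi}(s_0) \le V^{*}_{P|_{\overline{\mathcal{G}}}}(s_0)$ in all cases.
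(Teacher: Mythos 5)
Your proof follows essentially the same route as the paper's: a case split on the two branches of Alg.\,\ref{alg:convert_ssp}, with equality $V^{\pi}(s_0) = V^{*}_{P|_{\overline{\mathcal{G}}}}(s_0)$ in the first branch and the trivial bound $V^{\pi}(s_0) < \infty = V^{*}_{P|_{\overline{\mathcal{G}}}}(s_0)$ in the second via completeness (Thm.\,\ref{theorem:completeness}). Your added argument that a finite-value proper policy for $P|_{\overline{\mathcal{G}}}$ never traverses a $C'=\infty$ transition, so that its values under $C$ and $C'$ coincide at $s_0$, correctly fills in a detail the paper's sketch leaves implicit.
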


\begin{proof}[Proof (Sketch)]
If Alg.\,\ref{alg:convert_ssp} finds a partial proper policy  $\pi^{*}_{P|_{\overline{\mathcal{G}}}}$ for $P|_{\overline{\mathcal{G}}}$ then it returns it immediately (line 11) in which case $V^{\pi}(s_0) = V^{*}_{P|_{\overline{\mathcal{G}}}}(s_0)$. If $\pi^{*}_{P|_{\overline{\mathcal{G}}}}$ is not a partial proper policy then $V^{*}_{P|_{\overline{\mathcal{G}}}}(s_0) = \infty$. Since Alg.\,\ref{alg:convert_ssp} is complete (Thm.\,\ref{theorem:completeness}), $\pi$ is a partial proper policy where $V^{\pi}(s_0) < \infty$.
\end{proof}

\begin{corollary}
\label{theorem:optimality}
If $V^{*}_{P}(s_0) = V^{*}_{P|_{\overline{\mathcal{G}}}}(s_0)$, then Alg.\,\ref{alg:convert_ssp} returns the optimal policy for $P$.
\end{corollary}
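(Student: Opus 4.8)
The plan is to derive the result as a direct squeeze between Theorem~\ref{theorem:quality_of_policy} and the definition of the optimal value function for $P$. Let $\pi$ denote whatever policy Alg.~\ref{alg:convert_ssp} returns for $P$, and let $V^{\pi}$ be its value function in $P$. The whole argument hinges on sandwiching $V^{\pi}(s_0)$ between $V^{*}_{P}(s_0)$ from below and $V^{*}_{P|_{\overline{\mathcal{G}}}}(s_0)$ from above, and then collapsing both bounds using the hypothesis.

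First I would invoke Theorem~\ref{theorem:quality_of_policy}, which already establishes $V^{\pi}(s_0) \le V^{*}_{P|_{\overline{\mathcal{G}}}}(s_0)$ for the returned policy. Substituting the hypothesis $V^{*}_{P}(s_0) = V^{*}_{P|_{\overline{\mathcal{G}}}}(s_0)$ immediately yields $V^{\pi}(s_0) \le V^{*}_{P}(s_0)$. For the reverse inequality I would use that $V^{*}_{P}$ is the pointwise-minimal value function for $P$ and that $\pi$ is a genuine policy for $P$ (indeed a partial proper policy, by completeness, Theorem~\ref{theorem:completeness}), so $V^{\pi}(s_0) \ge V^{*}_{P}(s_0)$. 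Combining the two gives $V^{\pi}(s_0) = V^{*}_{P}(s_0)$, meaning $\pi$ attains the optimal expected cost from $s_0$ and is therefore optimal for $P$.

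To make this airtight I would confirm that the bound from Theorem~\ref{theorem:quality_of_policy} applies regardless of which branch of Alg.~\ref{alg:convert_ssp} fires (lines 11--12 versus lines 13--16). In fact, under the hypothesis the algorithm necessarily takes the first branch: since an SSP always admits a partial proper policy from $s_0$ we have $V^{*}_{P}(s_0) < \infty$, so the equality forces $V^{*}_{P|_{\overline{\mathcal{G}}}}(s_0) < \infty$, whence $\pi^{*}_{P|_{\overline{\mathcal{G}}}}$ is a partial proper policy and is returned at line~12. One then notes that a proper policy in $P|_{\overline{\mathcal{G}}}$ traverses only GPA-consistent transitions (any $\infty$-cost transition would make its value infinite), and on exactly those transitions $C' = C$, so the cost of $\pi$ in $P$ coincides with its cost in $P|_{\overline{\mathcal{G}}}$.

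I expect no serious obstacle here: the statement is essentially a corollary of Theorem~\ref{theorem:quality_of_policy}, and the only genuine step is the lower bound $V^{\pi}(s_0) \ge V^{*}_{P}(s_0)$ together with the observation that equality of the two optimal values at $s_0$ is precisely what certifies that the hierarchically optimal policy has not sacrificed any global optimality for $P$. The mildest point of care is interpreting ``optimal policy for $P$'' in the partial-proper-policy setting as optimality of the expected cost from the initial state $s_0$, which is exactly what matching $V^{*}_{P}(s_0)$ delivers.
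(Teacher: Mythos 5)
Your proof is correct and follows exactly the intended route: the paper states this as an immediate corollary of Theorem~\ref{theorem:quality_of_policy} (deferring details to the extended version), and the squeeze argument $V^{*}_{P}(s_0) \le V^{\pi}(s_0) \le V^{*}_{P|_{\overline{\mathcal{G}}}}(s_0) = V^{*}_{P}(s_0)$ is precisely that derivation. Your extra observations---that the hypothesis forces the first branch of Alg.~\ref{alg:convert_ssp} to fire, and that a finite-value policy in $P|_{\overline{\mathcal{G}}}$ only traverses transitions where $C'=C$ so its value in $P$ coincides---are exactly the right points to make the corollary airtight.
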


The following result indicates that only a finite set of training examples are needed to learn a GPA such that the GPA constrained problem will always yield the optimal policy for a given domain $D$ and goal formula $g$.

\begin{theorem}
\label{theorem:optimality_in_the_limit}
Suppose $D$ is a domain, $g$ is a goal formula over the predicates in $D$'s vocabulary, $\overline{\mathcal{G}}_{\Pi^{*}}$ is a GPA such that for every SSP instance $P$ of $D$ and $g$, there exists an optimal policy $\pi^*_P$ that is consistent with $\overline{\mathcal{G}}_{\Pi^{*}}$. Then there exists a finite set of training policies $\Pi^{*}$ from which $\overline{\mathcal{G}}_{\Pi^{*}}$ can be learned.

\end{theorem}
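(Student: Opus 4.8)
The plan is to leverage the finiteness of the abstract state and action spaces induced by canonical abstraction, reduce the target GPA to a finite set of abstract transitions, and then realize each such transition by a concrete transition lying on an optimal policy of some instance, so that feeding these finitely many optimal policies to the learner of Sec.\,\ref{subsec:learning_gpga} reproduces $\overline{\mathcal{G}}_{\Pi^{*}}$.

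First I would argue that $\overline{\mathcal{G}}_{\Pi^{*}}$ is a finite object. Canonical abstraction (Sec.\,\ref{subsec:canonical_abstraction}) fixes the abstract state space $\overline{S}$ and abstract action space $\overline{A}$ as functions of the domain vocabulary $D$ alone, independent of the number of objects in any instance, and both are finite. Hence the set of admissible hyperedges $\overline{\mathcal{V}} \times \mathbb{P}(\overline{\mathcal{V}}) \setminus \varnothing \times \overline{A}$ is finite, so $\overline{\mathcal{E}}$ is finite. Unfolding the compressed edges yields a finite abstract transition set $\overline{\mathcal{T}}^{*} = \{ (\overline{s}, \overline{a}, \overline{s}') : \exists\, \overline{e} \in \overline{\mathcal{E}},\ \overline{e}_\emph{src} = \overline{s},\ \overline{e}_\emph{act} = \overline{a},\ \overline{s}' \in \overline{e}_\emph{dest} \}$. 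Because the learner first installs one singleton-destination edge per abstract transition and then merges edges sharing a source and label, any collection of concrete transitions whose abstractions are exactly $\overline{\mathcal{T}}^{*}$ reconstructs $\overline{\mathcal{G}}_{\Pi^{*}}$.

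Next, for each $(\overline{s}, \overline{a}, \overline{s}') \in \overline{\mathcal{T}}^{*}$ I would exhibit a witnessing instance: an SSP $P$ of $(D, g)$, the optimal policy $\pi^{*}_{P}$ that the hypothesis guarantees to be consistent with $\overline{\mathcal{G}}_{\Pi^{*}}$, and a reachable concrete transition $(s, a, s')$ with $\pi^{*}_{P}(s) = a$, $t(s, a, s') > 0$, and $\alpha(F_\alpha, s) = \overline{s}$, $\beta(F_\beta, s, a) = \overline{a}$, $\alpha(F_\alpha, s') = \overline{s}'$. Collecting these policies into $\Pi^{*}$ gives a set of size at most $|\overline{\mathcal{T}}^{*}|$, hence finite. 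By the consistency hypothesis every concrete transition taken by each $\pi^{*}_{P} \in \Pi^{*}$ abstracts to an edge already present in $\overline{\mathcal{E}}$, so the learned GPA contains no spurious edges; by the coverage above it contains every edge of $\overline{\mathcal{E}}$. Therefore the GPA learned from $\Pi^{*}$ equals $\overline{\mathcal{G}}_{\Pi^{*}}$, as required.

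The main obstacle is the realizability claim: the hypothesis only says each instance admits \emph{some} consistent optimal policy, not that every edge of $\overline{\mathcal{G}}_{\Pi^{*}}$ is actually used by an optimal policy, so an edge could in principle be unwitnessed. I would close this gap by taking $\overline{\mathcal{G}}_{\Pi^{*}}$, without loss of generality, to be the GPA obtained by abstracting the union of one consistent optimal policy per instance of $(D, g)$: this GPA is well-defined, uses only finitely many distinct abstract edges by finiteness of $\overline{S}$ and $\overline{A}$, witnesses each of its edges by construction, and still satisfies the consistency property with the same witnessing policies. Dropping any edge of an originally given GPA that no optimal policy traverses cannot break consistency of a previously consistent policy, since consistency constrains only the transitions a policy actually takes; thus restricting to this canonical, fully-witnessed GPA loses nothing and makes the finite covering argument go through.
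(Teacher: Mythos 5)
Your proof follows the same core argument as the paper's own (which is only a sketch): the finiteness of the abstract state and action spaces induced by canonical abstraction bounds the number of hyperedges of $\overline{\mathcal{G}}_{\Pi^{*}}$, each hyperedge needs only finitely many concrete transitions to be learned, and therefore finitely many training policies suffice. You go beyond the paper in one useful respect: you explicitly identify the realizability gap --- that the hypothesis only guarantees each instance admits \emph{some} consistent optimal policy, not that every hyperedge of $\overline{\mathcal{G}}_{\Pi^{*}}$ is actually traversed by one --- and your WLOG restriction to the fully-witnessed GPA (abstracting the union of one consistent optimal policy per instance, which remains finite because the abstract spaces are) is a sound way to close it, whereas the paper's sketch silently assumes each edge is witnessable.
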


\begin{proof}[Proof (Sketch)]

Since the sizes of the abstract state and action spaces $\overline{S}$ and $\overline{A}$ are finite, the size of $\overline{\mathcal{G}}_{\Pi^{*}}$ is finite and is bounded by $|\overline{S}| \times |\overline{A}| \times |\overline{S}|$. For every $\overline{e} \in \overline{\mathcal{E}}$ at most $|e_\emph{dest}|$ different transitions are needed to learn $\overline{e}$. Since there is a finite number of edges a finite amount of training data will suffice for learning $\overline{\mathcal{G}}_{\Pi^{*}}$.
\end{proof}

In the worst case the cost functions of $P$ and $P|_{\overline{\mathcal{G}}}$ are similar and no savings can be obtained. However, in our empirical evaluation, we observed that typically very few and small training problems suffice to learn a compact GPA $\overline{\mathcal{G}}$ that allows cheap computation of good quality policies for problems that are significantly larger than those used during training. Finding the right set of examples $\Pi^{*}$ from which $\overline{\mathcal{G}}_{\Pi^{*}}$ is an interesting and non-trivial research problem that we leave to future work.

\section{Experiments}
\label{sec:experiments}

We conducted an empirical evaluation on five well-known benchmark domains that were selected from the International Planning Competition (IPC) and International Probabilistic Planning Competition (IPPC) \citep{DBLP:journals/jair/YounesLWA05} as well as robotic planning \citep{shah2020anytime}. As a part of our analysis, we use the time required to compute a solution and measure the quality of the found solutions to determine if GPAs allow efficient solving of SSPs.

We chose PPDDL as our representational language, which was the default language in IPPCs until 2011, after which, the Relational Dynamic Influence Diagram Language (RDDL) \citep{Sanner:RDDL} became the default. We chose PPDDL over RDDL since RDDL does not allow specifying the goal condition easily and as a result many benchmarks using RDDL are general purpose MDPs with no goals, and since modern state-of-art solvers for PPDDL are available.

For our baselines, we focus on complete solvers for SSPs. Deep learning based approaches do not guarantee completeness and thus are not directly comparable with our work. We used Labeled RTDP (LRTDP) \citep{DBLP:conf/aips/BonetG03} and Soft-FLARES \citep{DBLP:conf/atal/PinedaZ19} which are state-of-art (SOA), complete SSP solvers. These algorithms internally generate their own heuristics using the domain and problem file. We used the inadmissible FF heuristic \citep{DBLP:journals/aim/Hoffmann01} as the internal heuristic function for all algorithms since the baselines performed best using it.

We ran our experiments on a cluster of Intel Xeon E5-2680 v4 CPUs running at 2.4 GHz with 16 GiB of RAM. 
Our implementation is in Python and we ported C++ implementations of the baselines from \citet{DBLP:conf/atal/PinedaZ19} to Python.\footnote{Our source code is available at \url{https://github.com/AAIR-lab/GRAPL}} We utilized problem generators from the IPPC suite and those in \citet{shah2020anytime} for generating the training and test problems for all domains. We provide a brief description of the problem domains below.
\\
\textbf{Rover$(r, w, s, o)$} A set of $r$ rovers need to collect and drop $s$ samples that are present at one of $w$ waypoints. The rovers also need to collect images of different objectives $o$ that are visible from certain waypoints. This is an IPC domain and we converted it into a stochastic version by modifying sample collecting actions to fail with a probability of $0.4$ (keeping the rover in the same state).
\\
\textbf{Gripper$(b)$} A robot with two grippers is placed in an environment consisting of two rooms A and B. The objective of the robot is to transfer all the balls $b$ initially located in room A to room B. We modified the gripper to be slippery so that picking balls have a 20\% chance of failure.
\\
\textbf{Schedule$(C, p)$} is an IPPC domain that consists of a set of $p$ packets each belonging to one of $C$ different classes that need to be queued. A router must first process the arrival of a packet in order to route it. The interval at which the router processes arrivals is determined by probability 0.94.
\\
\textbf{Keva$(P, h)$} A robot uses $P$ keva planks to build a tower of height $h$. Planks are placed in a specific order in one of the two  locations preferring one location with probability $0.6$.  Despite this simple setting, the keva domain has been shown to be a challenging robotics problem \citep{shah2020anytime}.
\\
\textbf{Delicate Can$(c)$} An arrangement of $c$ cans on a table of which 1 is a delicate can. The objective is to pick up a specific goal can. Cans can obstruct the trajectory to the goal can and they must be moved in order to successfully pick up the goal can. Cans can be crushed with probability 0.1 (delicate cans have a higher chance with probability 0.8) and need to be revived.

\textbf{Training Setup} Our method learns GPAs in a few-shot fashion, requiring little to no training data. For our training set, we used at most ten optimal solution policies (obtained using LAO*) for each domain.  The time required to learn a GPA was less than 10 seconds in all cases in our experiments. This highlights the advantages of GPAs that can quickly be learned in a few-shot setting. Moreover, compared to neuro-symbolic methods, GPAs are not subject to catastrophic forgetting and new training data can easily be merged with the existing GPA using the process described in Sec.\,\ref{subsec:learning_gpga}.\\
\textbf{Test Setup} We fixed the time and memory limit for each problem to $7200$ seconds and $16$GiB respectively. To demonstrate generalizability, our test set contains problems with object counts that are much larger than the training policies used. The largest problems in our test sets contain at least twice the number of objects than those used in training. For example, in the Keva domain we use training policies with towers of height up to 6 and evaluate on problems with towers of height up to 14. The minimum and maximum number of problems that we used in our test set are 11 and 26 problems respectively.  Due to space limitations, information pertaining to the total number of training and test problems and their parameters, used hyperparameters for configuring baselines etc., are included in the appendix.

\begin{figure}[t]
\centering
\includegraphics[width=\linewidth]{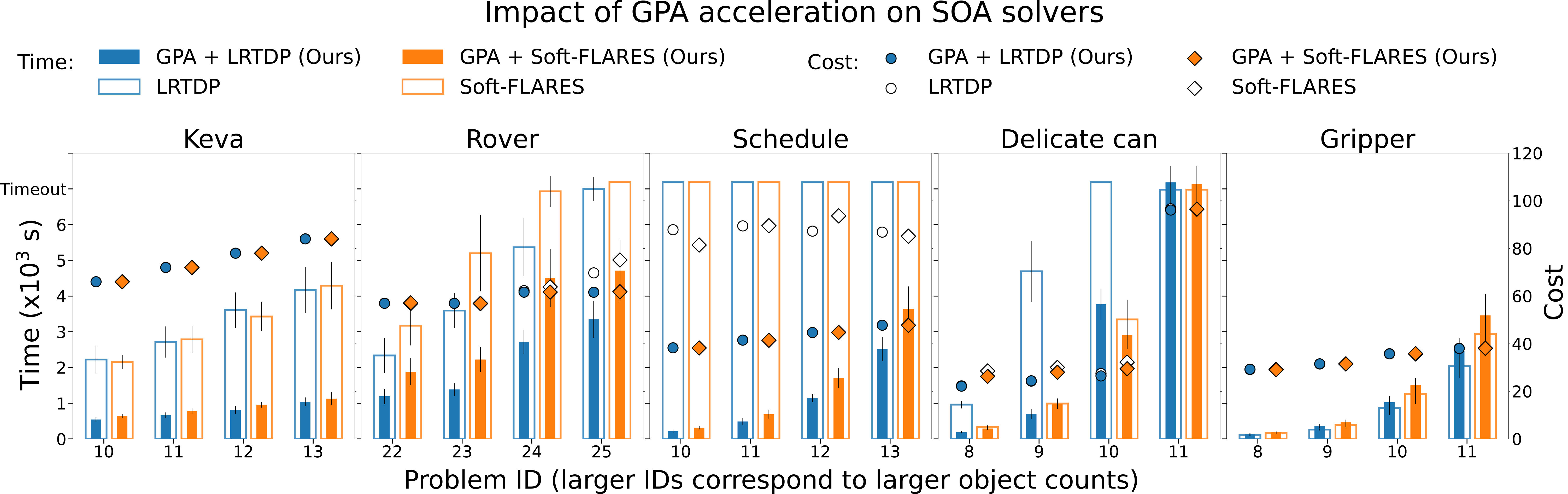}
\caption{Impact of learned GPAs on solver performance (lower values are better). Left y-axes and bars show solution times (in units of 1000 secs) for our approach and baseline SOA solvers (LRTDP and Soft-FLARES). Right y-axes and points show cost incurred by the policy computed by our approach and the baselines.
 We use the same SSP solver as the corresponding baseline in our approach. Error bars indicate 1 standard deviation (SD) averaged across 10 runs. For clarity, we only report results for the largest test problems and omit error bars from costs due to the low SDs. A complete view of our empirical results is available in the appendix.}
    \label{fig:results}
\end{figure}

\subsection{Results and Analysis}

Our evaluation metric compares the time required to find a partial proper policy. We also compare the quality of the computed policies by executing the policies for 100 trials with a horizon limit of 100 and averaging the obtained costs. We report our overall results averaged across 10 different runs and report results up to 1 standard deviation. Results of our experiments are illustrated in Fig.\,\ref{fig:results}. 

In four out of five domains (Schedule, Rover, Keva, and Delicate Can), our approach takes significantly less time compared to the corresponding baseline. For example, in Schedule, the baselines timed out for all of the large test problems reported. GPAs are able to successfully prune away action transitions that are not helping, leading to large savings in the computational effort. The costs obtained for executing these policies are also quite similar to each baseline showing that GPAs are capable of learning good policies much faster without compromising solution quality.

Our approach was unable to outperform the baselines in the Gripper domain. An interesting phenomenon that we observed was that training policies returned by LAO* were different for the case of even/odd balls due to tie-breaking and this led to the GPA not pruning actions as effectively. Nevertheless, we expected GPA to still outperform the baselines. We performed a deeper investigation and found that the FF heuristic used is already well-suited to prune away actions that the GPA would have otherwise pruned. This results in additional overhead being added in our SSP solver from the process of abstraction. However, heuristics that allow such pruning are difficult to synthesize, and in many cases, are hand-coded by an expert after employing significant effort.

Finally, because of the fixed timeout used, the maximum time of the baselines was bounded, making the impact of GPA appear smaller in larger problems. For example, in problem ID 25 of the Rover domain, the Soft-FLARES baseline timed out in all our runs, but 
when allowed to run to convergence, it took over 15000 seconds in a targeted experiment that we performed for investigating this issue.

\section{Related Work}
\label{sec:related_work}

There has been plenty of dedicated research to improve the efficiency for solving SSPs. LAO$^*$ \citep{hansen2001lao} computes policies by using heuristics to guide the search process. LRTDP \citep{DBLP:conf/aips/BonetG03} uses a labeling procedure in RTDP wherein a part of the subtree that is $\epsilon$-consistent is marked as \emph{solved} leading to faster ending of trials.  SSiPP \citep{DBLP:conf/aips/TrevizanV12} uses short-sightedness by only considering reachable states up to $t$ states away and solving this constrained SSP. Soft-FLARES \citep{DBLP:conf/atal/PinedaZ19} combines labeling and short-sightedness for computing solutions. These approaches are complete and can be configured to return optimal solutions, however, they fail to learn any generalized knowledge and as result cannot readily scale to larger problems with a greater number of objects.

\citet{DBLP:conf/aips/PinedaZ14} build sparse representations of SSP problems by reducing the branching factor in the ``environment’s choices'' (the set of probabilistic effects of an action), while our approach uses abstraction to create abstract controllers that generalize solutions to SSPs and reduces the branching factor in the agent’s choice (the set of applicable actions). Our approach always considers all possible outcomes of every action. This is a key advantage of our approach since GPAs are able to better handle unexpected outcomes when executing an action in the policy.

\citet{DBLP:conf/ijcai/BoutilierRP01} utilize decision-theoretic regression to compute generalized policies for first-order MDPs represented using situation calculus. They utilize symbolic dynamic programming to compute a symbolic value function that applies to problems with varying number of objects. FOALP \citep{DBLP:conf/uai/SannerB05} uses linear programming to compute an approximation of the value function for first-order MDPs while providing upper bounds on the approximation error irrespective of the domain size. A key limitation of their approach is requiring the use of a representation of action models over which it is possible to regress using situation calculus. API \citep{DBLP:journals/jair/FernYG06} uses approximate policy iteration with taxonomic decision lists to form policies. They use Monte Carlo simulations with random walks on a single problem to construct a policy. API offers no guarantees of completeness or hierarchical optimality.

\citet{DBLP:conf/nips/ParrR97} propose the hierarchical abstract machine (HAM) framework wherein component solutions from problem instances can be combined to solve larger problem instances efficiently. Recently, \citet{DBLP:conf/ijcai/BaiR17} extended HAMs to Reinforcement Learning  settings by leveraging internal transitions of the HAMs. A key limitation of both these approaches is that the HAMs were hand-coded by a domain expert. 

Related work in Generalized Planning focuses on the problem of computing generalized plans and policies such as our GPAs \cite{DBLP:journals/ai/SrivastavaIZ11}.
\citet{DBLP:conf/aips/BonetPG09} automatically create finite-state controllers for solving problems using a set of examples by modeling the search as a contingent problem. Their approach is limited in applicability since it only works on deterministic problems and the features they use are hand-coded. \citet{DBLP:conf/ijcai/AguasCJ16} utilize small example policies to synthesize hierarchical finite state controllers that can call each other. However, their approach requires all training data to be provided upfront. D2L \citep{DBLP:conf/aaai/FrancesBG21} utilizes description logics to automatically generate features and reactive policies based on those features. Their approach comes with no guarantees for finding a solution or its cost and can only work on deterministic problems.

Deep learning based approaches such as ASNets \cite{DBLP:conf/aaai/ToyerTTX18} learn generalized policies for SSPs using a neuro-symbolic approach. They use the action schema from PPDDL to create alternating action and proposition layers. They do not learn learn generalized controllers and instead duplicate weights in a post-process step to represent the generalized policy. A key limitation of their approach is the lack of any theoretical guarantees of completeness or optimality.

Our approach differs from these approaches in several aspects. Our approach constructs a GPA automatically without any human intervention. Using canonical abstraction, we lift problem-specific characteristics like object names and object counts. Another key difference between other techniques is that our approach can easily incorporate solutions from new examples into the GPA without having to remember any of the earlier examples. This allows our learning to scale better and can naturally utilize \emph{leapfrog learning} \citep{DBLP:conf/aips/GroshevGTSA18,DBLP:conf/aaai/KariaS21} when presented with a large problem in the absence of training data. Finally, our approach comes with guarantees of completeness and hierarchical optimality given the training data presented. This means that if a solution exists, our approach will find it and it will be guaranteed to be hierarchically optimal.

\section{Conclusions and Future Work}
\label{sec:conclusions}

We show that non-deterministic Generalized Policy Automata (GPAs) constructed using solutions of small example SSPs are able to significantly reduce the computational effort for finding solutions for larger related SSPs. Furthermore, for many benchmark problems, the search space pruned by GPAs does not prune away relevant transitions allowing our approach to compute policies of comparable cost in a fraction of the effort. Our approach comes with guarantees of hierarchical optimality and also comes with the guarantee of always finding a solution to the SSP.

There are several interesting research directions for future work. Description Logics (DL) are more expressive than canonical abstractions and have been demonstrated to be effective at synthesizing memoryless controllers for deterministic planning problems \citet{DBLP:conf/aaai/BonetFG19}. Our approach can easily utilize any relational abstraction and it would be interesting to evaluate the efficacy of description logics. Finally, our approach is applicable when solutions have a pattern. We believe that more intelligent training data generation methods could help improve performance in domains like Delicate Can. We plan to investigate these directions of research in future work.

\section*{Acknowledgements}
We would like to thank Deepak Kala Vasudevan for help with a prototype implementation of the source code. We would like to thank the Research Computing Group at Arizona State University for providing compute hours for our experiments. This work was supported in part by the NSF under grants IIS 1909370 and IIS 1942856.

\bibliographystyle{abbrvnat}
\bibliography{neurips_2022}

\appendix

\clearpage

\section{Extended Experiments and Results}

\textbf{Training and Test Setup} Descriptions of the training problems used by us and their parameters can be found in Table.\,\ref{table:complete_data_training}. Test problems and parameters along with complete information for the solution times, costs, and their standard deviations for 10 runs are available in tabular format in Tables\,\ref{table:complete_data_keva}, \ref{table:complete_data_rover}, \ref{table:complete_data_schedule}, \ref{table:complete_data_delicate_can}, and \ref{table:complete_data_gripper}. Note that for the Keva domain, the standard deviations for the costs incurred are 0. This is accurate since the only source of stochasticity in Keva is a \emph{human place} action that determines where the human places a plank which is one of two locations. As a result, Keva policies are deterministic in execution since the human always places a plank and all other actions are deterministic. It is interesting that despite this simplistic domain, the baselines are unable to reasonably converge within the timeout. We also present an extended version of Fig.\,\ref{fig:results} of the main paper that includes results for a larger suite a test problems for a better view of our overall results. These results are reported in Fig.\,\ref{fig:results_supplemental}. Note that the solution times on the left y-axis of these plots are shown in log scale. We omitted the problems with smaller IDs, mainly whose solutions times (in log scale) were not visible for both our as well as baseline approaches, for better visualization.

\textbf{Hyperparameters} We used $\epsilon=10^{-5}$ as the value for determining whether an algorithm has converged to an $\epsilon$-consistent policy. We set the total number of trials for all algorithms to $\infty$. As a result, each algorithm would only return once it has converged or if the time limit has been exceeded. For Soft-FLARES, we used $t=4$ which controls the horizon of the sub-tree that is checked for being $\epsilon$-consistent during the labeling procedure. Our distance metric is the \emph{step} function which simply counts the depth until the horizon is exceeded. For the selective sampling procedure, we used the \emph{logistic} sampler configured with $\alpha=0.1$ and $\beta=0.9$.

\begin{figure}[h]
\centering
\includegraphics[angle=0, scale=0.05, width=\linewidth]{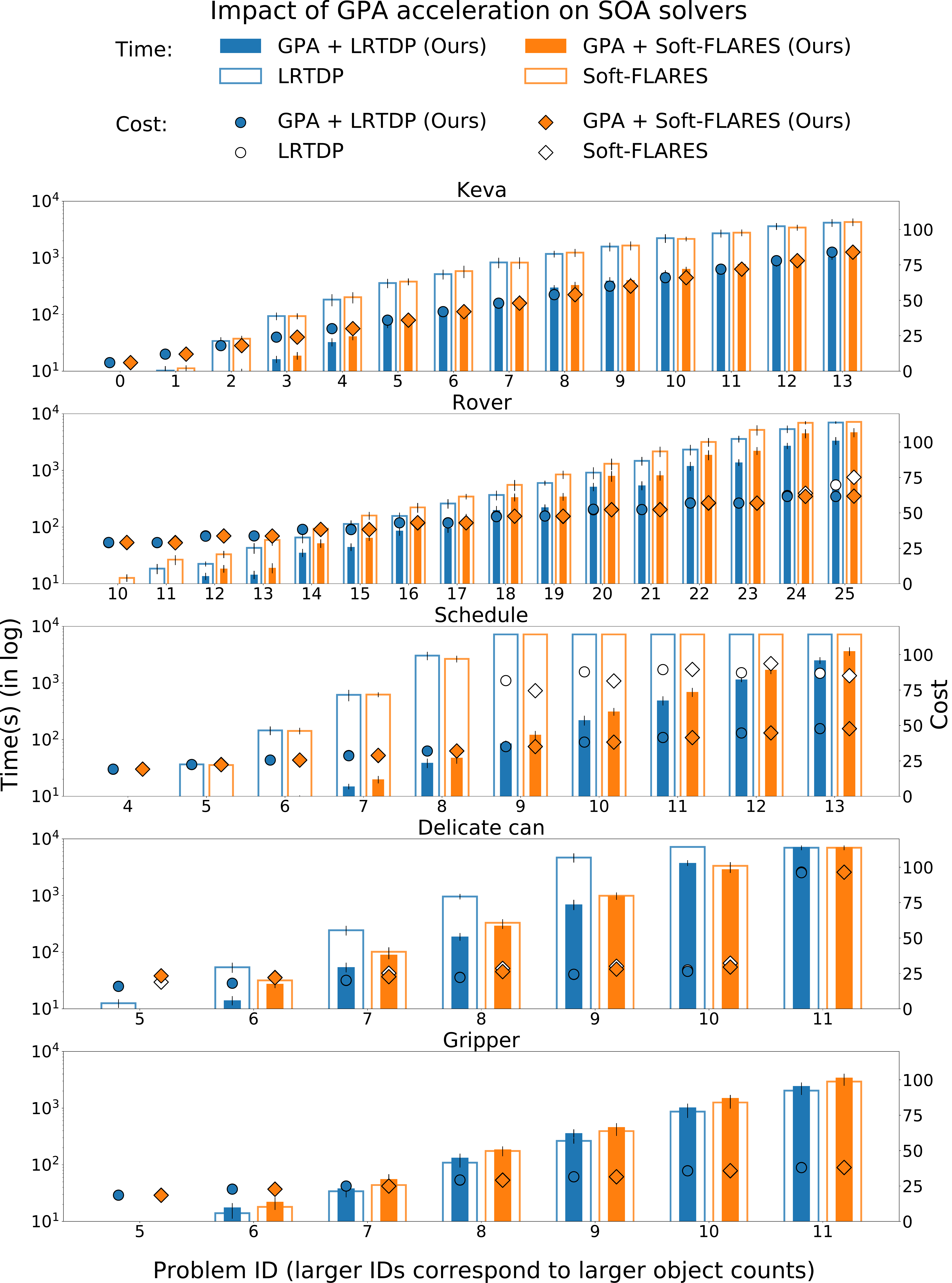}
\caption{Impact of learned GPAs on solver performance of all test problems (lower values are better). Left y-axes and bars show solution times (in log scale), and right y-axes and points show cost incurred by the policy computed by our approach and baseline SOA solvers (LRTDP and Soft-FLARES).
We use the same SSP solver as the corresponding baseline in our approach. Error bars for solution times indicate 1 standard deviation (SD) averaged across 10 runs.}
\label{fig:results_supplemental}
\end{figure}

\clearpage
\begin{landscape}

\begin{table}[t]
    \centering
    \scriptsize
    \caption{\small Our training setup for all domains. ID refers to the problem ID in the training set. The other columns refer to the parameters passed to the problem generator for generating the problem. Entries marked $-$ indicate that there was no such problem in the training set.}
    \begin{tabular}{rccccc}
    \toprule
    & \multicolumn{5}{c}{$\theta$} \\
    \cmidrule{2-6}

    ID & Keva$(P, h)$ & Rover$(r, w, s, o)$ & Schedule$(C, p)$ & Delicate Can$(c)$ & Gripper$(b)$ \\
    \midrule
    0 & $(2, 1)$ & $(1, 3, 1, 2)$ & $(1, 2)$ & $(2)$ & $(1)$ \\
    1 & $(4, 2)$ & $(1, 4, 1, 2)$ & $(1, 3)$ & $(3)$ & $(1)$ \\
    2 & $(6, 3)$ & $(1, 3, 2, 2)$ & $(1, 4)$ & $(4)$ & $(2)$ \\
    3 & $(8, 4)$ & $(1, 4, 2, 2)$ & $-$ & $(5)$ & $(2)$ \\
    4 & $(10, 5)$ & $(1, 3, 3, 2)$ & $-$ & $(6)$ & $(3)$ \\
    5 & $(12, 6)$ & $(1, 4, 3, 2)$ & $-$ & $-$ & $(3)$ \\
    6 & $-$ & $(1, 3, 4, 2)$ & $-$ & $-$ & $(4)$ \\
    7 & $-$ & $(1, 4, 4, 2)$ & $-$ & $-$ & $(4)$ \\
    8 & $-$ & $(1, 3, 5, 2)$ & $-$ & $-$ & $(5)$ \\
    9 & $-$ & $(1, 4, 5, 2)$ & $-$ & $-$ & $(5)$ \\

    \bottomrule
    \end{tabular}
    
    \label{table:complete_data_training}
\end{table}


\begin{table}[t]
    \centering
    \scriptsize
    \caption{\small Our test setup for the Keva$(P, h)$ domain (lower values better). ID refers to the problem ID in the test set. $\theta$ refers to the parameters passed to the problem generator for generating the problem. Times indicate the seconds required to find a policy. Similarly, costs are reported as average costs obtained by executing the computed policy for $100$ trials. We ran our experiments using a different random seed for 10 different runs and report average metrics up to one standard deviation. Better metrics are at least 5\% better and are indicated using bold font.}
    \begin{tabular}{rllllllllllll}
    \toprule
     &  & \multicolumn{2}{c}{Time$(x \equiv \text{LRTDP})$} & & \multicolumn{2}{c}{Time$(x \equiv \text{Soft-FLARES})$} & & \multicolumn{2}{c}{Cost$(x \equiv \text{LRTDP})$} & & \multicolumn{2}{c}{Cost$(x \equiv \text{Soft-FLARES})$} \\
    \cmidrule{3-4} \cmidrule{6-7} \cmidrule{9-10} \cmidrule{12-13}
    \multicolumn{1}{c}{ID} & \multicolumn{1}{c}{$\theta$}  & \multicolumn{1}{c}{$x$} & \multicolumn{1}{c}{Ours + $x$} & & \multicolumn{1}{c}{$x$} & \multicolumn{1}{c}{Ours + $x$} & & \multicolumn{1}{c}{$x$} & \multicolumn{1}{c}{Ours + $x$} & & \multicolumn{1}{c}{$x$} & \multicolumn{1}{c}{Ours + $x$} \\
    \midrule

0 & $(29, 1)$ & 2.45 $\pm 0.32$ & \textbf{1.44} $\pm 0.17$ & & 2.65 $\pm 0.37$ & \textbf{1.47} $\pm 0.27$ & & 6.00 $\pm 0.00$ & 6.00 $\pm 0.00$ & & 6.00 $\pm 0.00$ & 6.00 $\pm 0.00$\\
1 & $(29, 2)$ & 10.23 $\pm 2.05$ & \textbf{4.15} $\pm 0.68$ & & 11.22 $\pm 1.45$ & \textbf{4.63} $\pm 0.58$ & & 12.00 $\pm 0.00$ & 12.00 $\pm 0.00$ & & 12.00 $\pm 0.00$ & 12.00 $\pm 0.00$\\
2 & $(29, 3)$ & 34.05 $\pm 5.53$ & \textbf{8.14} $\pm 0.92$ & & 37.33 $\pm 4.71$ & \textbf{9.55} $\pm 1.45$ & & 18.00 $\pm 0.00$ & 18.00 $\pm 0.00$ & & 18.00 $\pm 0.00$ & 18.00 $\pm 0.00$\\
3 & $(29, 4)$ & 93.78 $\pm 14.53$ & \textbf{16.26} $\pm 2.34$ & & 93.39 $\pm 11.30$ & \textbf{18.80} $\pm 2.84$ & & 24.00 $\pm 0.00$ & 24.00 $\pm 0.00$ & & 24.00 $\pm 0.00$ & 24.00 $\pm 0.00$\\
4 & $(29, 5)$ & 183.57 $\pm 43.80$ & \textbf{32.61} $\pm 5.40$ & & 201.99 $\pm 45.05$ & \textbf{41.19} $\pm 6.05$ & & 30.00 $\pm 0.00$ & 30.00 $\pm 0.00$ & & 30.00 $\pm 0.00$ & 30.00 $\pm 0.00$\\
5 & $(29, 6)$ & 358.98 $\pm 67.32$ & \textbf{68.73} $\pm 11.04$ & & 379.84 $\pm 51.43$ & \textbf{74.47} $\pm 9.10$ & & 36.00 $\pm 0.00$ & 36.00 $\pm 0.00$ & & 36.00 $\pm 0.00$ & 36.00 $\pm 0.00$\\
6 & $(29, 7)$ & 515.85 $\pm 101.56$ & \textbf{115.28} $\pm 11.90$ & & 583.52 $\pm 143.15$ & \textbf{120.30} $\pm 17.58$ & & 42.00 $\pm 0.00$ & 42.00 $\pm 0.00$ & & 42.00 $\pm 0.00$ & 42.00 $\pm 0.00$\\
7 & $(29, 8)$ & 829.19 $\pm 174.46$ & \textbf{174.88} $\pm 17.36$ & & 825.84 $\pm 192.54$ & \textbf{191.44} $\pm 27.16$ & & 48.00 $\pm 0.00$ & 48.00 $\pm 0.00$ & & 48.00 $\pm 0.00$ & 48.00 $\pm 0.00$\\
8 & $(29, 9)$ & 1174.79 $\pm 160.45$ & \textbf{298.61} $\pm 30.82$ & & 1236.91 $\pm 200.02$ & \textbf{331.95} $\pm 45.28$ & & 54.00 $\pm 0.00$ & 54.00 $\pm 0.00$ & & 54.00 $\pm 0.00$ & 54.00 $\pm 0.00$\\
9 & $(29, 10)$ & 1578.38 $\pm 279.31$ & \textbf{394.71} $\pm 62.16$ & & 1647.76 $\pm 297.42$ & \textbf{406.45} $\pm 37.23$ & & 60.00 $\pm 0.00$ & 60.00 $\pm 0.00$ & & 60.00 $\pm 0.00$ & 60.00 $\pm 0.00$\\
10 & $(29, 11)$ & 2223.43 $\pm 390.05$ & \textbf{544.26} $\pm 58.84$ & & 2158.91 $\pm 199.54$ & \textbf{639.60} $\pm 54.35$ & & 66.00 $\pm 0.00$ & 66.00 $\pm 0.00$ & & 66.00 $\pm 0.00$ & 66.00 $\pm 0.00$\\
11 & $(29, 12)$ & 2713.78 $\pm 435.39$ & \textbf{665.29} $\pm 76.02$ & & 2787.92 $\pm 378.85$ & \textbf{782.82} $\pm 74.38$ & & 72.00 $\pm 0.00$ & 72.00 $\pm 0.00$ & & 72.00 $\pm 0.00$ & 72.00 $\pm 0.00$\\
12 & $(29, 13)$ & 3606.12 $\pm 494.69$ & \textbf{815.11} $\pm 117.62$ & & 3427.93 $\pm 409.34$ & \textbf{958.17} $\pm 81.59$ & & 78.00 $\pm 0.00$ & 78.00 $\pm 0.00$ & & 78.00 $\pm 0.00$ & 78.00 $\pm 0.00$\\
13 & $(29, 14)$ & 4171.27 $\pm 645.70$ & \textbf{1042.44} $\pm 119.63$ & & 4291.64 $\pm 663.83$ & \textbf{1128.84} $\pm 183.24$ & & 84.00 $\pm 0.00$ & 84.00 $\pm 0.00$ & & 84.00 $\pm 0.00$ & 84.00 $\pm 0.00$\\

    \bottomrule
    \end{tabular}
    
    \label{table:complete_data_keva}
\end{table}


\begin{table}[t]
    \centering
    \scriptsize
    \caption{\small Our test setup for the Rover$(r, w, s, o)$ domain (lower values better). ID refers to the problem ID in the test set. $\theta$ refers to the parameters passed to the problem generator for generating the problem. Times indicate the seconds required to find a policy. Similarly, costs are reported as average costs obtained by executing the computed policy for $100$ trials. We ran our experiments using a different random seed for 10 different runs and report average metrics up to one standard deviation. Better metrics are at least 5\% better and are indicated using bold font.}
    \begin{tabular}{rllllllllllll}
    \toprule
     &  & \multicolumn{2}{c}{Time$(x \equiv \text{LRTDP})$} & & \multicolumn{2}{c}{Time$(x \equiv \text{Soft-FLARES})$} & & \multicolumn{2}{c}{Cost$(x \equiv \text{LRTDP})$} & & \multicolumn{2}{c}{Cost$(x \equiv \text{Soft-FLARES})$} \\
    \cmidrule{3-4} \cmidrule{6-7} \cmidrule{9-10} \cmidrule{12-13}
    \multicolumn{1}{c}{ID} & \multicolumn{1}{c}{$\theta$}  & \multicolumn{1}{c}{$x$} & \multicolumn{1}{c}{Ours + $x$} & & \multicolumn{1}{c}{$x$} & \multicolumn{1}{c}{Ours + $x$} & & \multicolumn{1}{c}{$x$} & \multicolumn{1}{c}{Ours + $x$} & & \multicolumn{1}{c}{$x$} & \multicolumn{1}{c}{Ours + $x$} \\
    \midrule

0 & $(1, 3, 1, 2)$ & 0.02 $\pm 0.01$ & 0.02 $\pm 0.01$ & & 0.02 $\pm 0.01$ & \textbf{0.01} $\pm 0.00$ & & 6.62 $\pm 0.08$ & 6.68 $\pm 0.08$ & & 6.69 $\pm 0.12$ & 6.68 $\pm 0.09$\\
1 & $(1, 4, 1, 2)$ & 0.02 $\pm 0.01$ & 0.02 $\pm 0.00$ & & 0.02 $\pm 0.01$ & \textbf{0.01} $\pm 0.00$ & & 6.64 $\pm 0.08$ & 6.70 $\pm 0.09$ & & 6.62 $\pm 0.12$ & 6.70 $\pm 0.12$\\
2 & $(1, 3, 2, 2)$ & 0.10 $\pm 0.03$ & \textbf{0.03} $\pm 0.01$ & & 0.09 $\pm 0.02$ & \textbf{0.03} $\pm 0.00$ & & 10.33 $\pm 0.14$ & 10.29 $\pm 0.08$ & & 10.38 $\pm 0.15$ & 10.34 $\pm 0.10$\\
3 & $(1, 4, 2, 2)$ & 0.18 $\pm 0.04$ & \textbf{0.03} $\pm 0.01$ & & 0.22 $\pm 0.02$ & \textbf{0.04} $\pm 0.01$ & & 10.34 $\pm 0.11$ & 10.30 $\pm 0.12$ & & 10.27 $\pm 0.12$ & 10.26 $\pm 0.16$\\
4 & $(1, 3, 3, 2)$ & 0.43 $\pm 0.06$ & \textbf{0.18} $\pm 0.03$ & & 0.46 $\pm 0.07$ & \textbf{0.20} $\pm 0.04$ & & 15.03 $\pm 0.19$ & 15.01 $\pm 0.16$ & & 14.94 $\pm 0.23$ & 15.09 $\pm 0.11$\\
5 & $(1, 4, 3, 2)$ & 0.80 $\pm 0.14$ & \textbf{0.10} $\pm 0.02$ & & 0.96 $\pm 0.24$ & \textbf{0.09} $\pm 0.01$ & & 15.00 $\pm 0.23$ & 15.01 $\pm 0.23$ & & 14.95 $\pm 0.17$ & 14.95 $\pm 0.19$\\
6 & $(1, 3, 4, 2)$ & 1.08 $\pm 0.12$ & \textbf{0.54} $\pm 0.10$ & & 1.66 $\pm 0.39$ & \textbf{0.68} $\pm 0.11$ & & 19.76 $\pm 0.21$ & 19.65 $\pm 0.31$ & & 19.68 $\pm 0.15$ & 19.62 $\pm 0.17$\\
7 & $(1, 4, 4, 2)$ & 2.33 $\pm 0.43$ & \textbf{0.54} $\pm 0.08$ & & 3.22 $\pm 0.69$ & \textbf{0.70} $\pm 0.14$ & & 19.63 $\pm 0.13$ & 19.63 $\pm 0.18$ & & 19.70 $\pm 0.16$ & 19.73 $\pm 0.20$\\
8 & $(1, 3, 5, 2)$ & 3.54 $\pm 0.59$ & \textbf{1.61} $\pm 0.23$ & & 4.25 $\pm 0.84$ & \textbf{2.42} $\pm 0.51$ & & 24.36 $\pm 0.17$ & 24.35 $\pm 0.30$ & & 24.34 $\pm 0.22$ & 24.46 $\pm 0.09$\\
9 & $(1, 4, 5, 2)$ & 7.78 $\pm 1.62$ & \textbf{1.82} $\pm 0.28$ & & 9.57 $\pm 1.69$ & \textbf{2.26} $\pm 0.42$ & & 24.41 $\pm 0.27$ & 24.38 $\pm 0.16$ & & 24.32 $\pm 0.34$ & 24.23 $\pm 0.32$\\
10 & $(1, 3, 6, 2)$ & 8.77 $\pm 1.08$ & \textbf{4.91} $\pm 1.04$ & & 12.67 $\pm 1.91$ & \textbf{6.40} $\pm 0.93$ & & 28.95 $\pm 0.26$ & 29.13 $\pm 0.26$ & & 28.98 $\pm 0.25$ & 29.14 $\pm 0.31$\\
11 & $(1, 4, 6, 2)$ & 18.50 $\pm 3.76$ & \textbf{4.31} $\pm 0.77$ & & 26.62 $\pm 5.28$ & \textbf{5.28} $\pm 0.56$ & & 28.96 $\pm 0.29$ & 29.07 $\pm 0.18$ & & 29.09 $\pm 0.18$ & 28.86 $\pm 0.25$\\
12 & $(1, 3, 7, 2)$ & 22.42 $\pm 2.26$ & \textbf{13.60} $\pm 1.98$ & & 33.11 $\pm 4.79$ & \textbf{18.49} $\pm 2.85$ & & 33.80 $\pm 0.29$ & 33.54 $\pm 0.20$ & & 33.74 $\pm 0.26$ & 33.71 $\pm 0.24$\\
13 & $(1, 4, 7, 2)$ & 42.96 $\pm 9.15$ & \textbf{14.47} $\pm 2.40$ & & 59.22 $\pm 12.54$ & \textbf{19.09} $\pm 3.84$ & & 33.70 $\pm 0.34$ & 33.75 $\pm 0.22$ & & 33.63 $\pm 0.21$ & 33.67 $\pm 0.28$\\
14 & $(1, 3, 8, 2)$ & 65.32 $\pm 13.63$ & \textbf{35.25} $\pm 5.75$ & & 93.44 $\pm 13.87$ & \textbf{51.72} $\pm 8.90$ & & 38.30 $\pm 0.26$ & 38.39 $\pm 0.24$ & & 38.28 $\pm 0.27$ & 38.36 $\pm 0.35$\\
15 & $(1, 4, 8, 2)$ & 113.36 $\pm 17.49$ & \textbf{44.60} $\pm 6.77$ & & 159.83 $\pm 25.30$ & \textbf{64.87} $\pm 7.88$ & & 38.43 $\pm 0.17$ & 38.18 $\pm 0.22$ & & 38.33 $\pm 0.29$ & 38.24 $\pm 0.36$\\
16 & $(1, 3, 9, 2)$ & 156.88 $\pm 23.55$ & \textbf{86.26} $\pm 16.11$ & & 222.42 $\pm 43.47$ & \textbf{122.44} $\pm 21.53$ & & 43.02 $\pm 0.40$ & 42.98 $\pm 0.45$ & & 42.87 $\pm 0.26$ & 43.11 $\pm 0.29$\\
17 & $(1, 4, 9, 2)$ & 260.78 $\pm 49.61$ & \textbf{95.92} $\pm 16.68$ & & 345.68 $\pm 38.97$ & \textbf{142.71} $\pm 27.75$ & & 43.05 $\pm 0.19$ & 42.94 $\pm 0.31$ & & 42.98 $\pm 0.36$ & 43.00 $\pm 0.24$\\
18 & $(1, 3, 10, 2)$ & 367.77 $\pm 71.46$ & \textbf{199.80} $\pm 35.50$ & & 555.65 $\pm 121.90$ & \textbf{337.10} $\pm 52.84$ & & 47.65 $\pm 0.34$ & 47.36 $\pm 0.16$ & & 47.62 $\pm 0.30$ & 47.67 $\pm 0.42$\\
19 & $(1, 4, 10, 2)$ & 599.74 $\pm 60.23$ & \textbf{223.38} $\pm 26.79$ & & 848.57 $\pm 141.78$ & \textbf{345.02} $\pm 53.64$ & & 47.85 $\pm 0.39$ & 47.78 $\pm 0.27$ & & 47.49 $\pm 0.28$ & 47.70 $\pm 0.19$\\
20 & $(1, 3, 11, 2)$ & 914.39 $\pm 217.86$ & \textbf{515.34} $\pm 85.61$ & & 1312.81 $\pm 302.78$ & \textbf{800.94} $\pm 169.25$ & & 52.17 $\pm 0.22$ & 52.58 $\pm 0.35$ & & 52.29 $\pm 0.32$ & 52.21 $\pm 0.29$\\
21 & $(1, 4, 11, 2)$ & 1472.73 $\pm 254.76$ & \textbf{543.15} $\pm 97.44$ & & 2168.73 $\pm 450.03$ & \textbf{819.44} $\pm 159.47$ & & 52.36 $\pm 0.27$ & 52.35 $\pm 0.41$ & & 52.37 $\pm 0.46$ & 52.26 $\pm 0.30$\\
22 & $(1, 3, 12, 2)$ & 2336.52 $\pm 492.69$ & \textbf{1195.16} $\pm 213.55$ & & 3171.35 $\pm 554.37$ & \textbf{1885.02} $\pm 375.15$ & & 57.00 $\pm 0.35$ & 56.94 $\pm 0.26$ & & 56.85 $\pm 0.36$ & 57.12 $\pm 0.62$\\
23 & $(1, 4, 12, 2)$ & 3593.42 $\pm 487.19$ & \textbf{1385.19} $\pm 187.00$ & & 5196.66 $\pm 1063.94$ & \textbf{2224.24} $\pm 351.60$ & & 56.93 $\pm 0.24$ & 56.94 $\pm 0.24$ & & 56.80 $\pm 0.27$ & 56.94 $\pm 0.35$\\
24 & $(1, 3, 13, 2)$ & 5366.69 $\pm 807.68$ & \textbf{2721.86} $\pm 337.95$ & & 6933.16 $\pm 432.32$ & \textbf{4505.52} $\pm 811.02$ & & 62.26 $\pm 1.69$ & 61.63 $\pm 0.39$ & & 63.83 $\pm 2.43$ & 61.67 $\pm 0.29$\\
25 & $(1, 4, 13, 2)$ & 6997.37 $\pm 338.99$ & \textbf{3349.16} $\pm 517.48$ & & 7200.00 $\pm 0.00$ & \textbf{4710.61} $\pm 855.90$ & & 69.72 $\pm 11.30$ & \textbf{61.60} $\pm 0.46$ & & 75.14 $\pm 13.01$ & \textbf{61.83} $\pm 0.37$\\

    \bottomrule
    \end{tabular}
    
    \label{table:complete_data_rover}
\end{table}


\begin{table}[t]
    \centering
    \scriptsize
    \caption{\small Our test setup for the Schedule$(C, p)$ domain (lower values better). ID refers to the problem ID in the test set. $\theta$ refers to the parameters passed to the problem generator for generating the problem. Times indicate the seconds required to find a policy. Similarly, costs are reported as average costs obtained by executing the computed policy for $100$ trials. We ran our experiments using a different random seed for 10 different runs and report average metrics up to one standard deviation. Better metrics are at least 5\% better and are indicated using bold font.}
    \begin{tabular}{rllllllllllll}
    \toprule
     &  & \multicolumn{2}{c}{Time$(x \equiv \text{LRTDP})$} & & \multicolumn{2}{c}{Time$(x \equiv \text{Soft-FLARES})$} & & \multicolumn{2}{c}{Cost$(x \equiv \text{LRTDP})$} & & \multicolumn{2}{c}{Cost$(x \equiv \text{Soft-FLARES})$} \\
    \cmidrule{3-4} \cmidrule{6-7} \cmidrule{9-10} \cmidrule{12-13}
    \multicolumn{1}{c}{ID} & \multicolumn{1}{c}{$\theta$}  & \multicolumn{1}{c}{$x$} & \multicolumn{1}{c}{Ours + $x$} & & \multicolumn{1}{c}{$x$} & \multicolumn{1}{c}{Ours + $x$} & & \multicolumn{1}{c}{$x$} & \multicolumn{1}{c}{Ours + $x$} & & \multicolumn{1}{c}{$x$} & \multicolumn{1}{c}{Ours + $x$} \\
    \midrule

0 & $(1, 2)$ & 0.02 $\pm 0.01$ & 0.02 $\pm 0.01$ & & 0.02 $\pm 0.01$ & 0.02 $\pm 0.01$ & & 6.41 $\pm 0.13$ & 6.32 $\pm 0.13$ & & 6.35 $\pm 0.08$ & 6.45 $\pm 0.15$\\
1 & $(1, 3)$ & 0.08 $\pm 0.02$ & \textbf{0.06} $\pm 0.02$ & & 0.07 $\pm 0.02$ & 0.07 $\pm 0.02$ & & 9.53 $\pm 0.12$ & 9.50 $\pm 0.09$ & & 9.60 $\pm 0.13$ & 9.52 $\pm 0.12$\\
2 & $(1, 4)$ & 0.32 $\pm 0.05$ & \textbf{0.16} $\pm 0.03$ & & 0.33 $\pm 0.05$ & \textbf{0.19} $\pm 0.03$ & & 12.72 $\pm 0.14$ & 12.73 $\pm 0.14$ & & 12.81 $\pm 0.15$ & 12.77 $\pm 0.15$\\
3 & $(1, 5)$ & 1.58 $\pm 0.25$ & \textbf{0.41} $\pm 0.06$ & & 1.60 $\pm 0.34$ & \textbf{0.46} $\pm 0.09$ & & 16.00 $\pm 0.14$ & 15.97 $\pm 0.17$ & & 15.89 $\pm 0.11$ & 15.95 $\pm 0.19$\\
4 & $(1, 6)$ & 6.45 $\pm 0.77$ & \textbf{1.02} $\pm 0.22$ & & 7.36 $\pm 1.28$ & \textbf{1.21} $\pm 0.20$ & & 19.06 $\pm 0.16$ & 19.16 $\pm 0.16$ & & 19.17 $\pm 0.24$ & 19.14 $\pm 0.14$\\
5 & $(1, 7)$ & 36.46 $\pm 7.19$ & \textbf{2.46} $\pm 0.56$ & & 35.61 $\pm 6.35$ & \textbf{3.10} $\pm 0.59$ & & 22.37 $\pm 0.19$ & 22.45 $\pm 0.19$ & & 22.45 $\pm 0.25$ & 22.28 $\pm 0.24$\\
6 & $(1, 8)$ & 145.33 $\pm 24.97$ & \textbf{6.58} $\pm 1.25$ & & 142.70 $\pm 18.86$ & \textbf{8.42} $\pm 1.93$ & & 25.57 $\pm 0.18$ & 25.56 $\pm 0.09$ & & 25.52 $\pm 0.23$ & 25.59 $\pm 0.26$\\
7 & $(1, 9)$ & 616.36 $\pm 140.89$ & \textbf{14.92} $\pm 1.65$ & & 622.93 $\pm 61.96$ & \textbf{19.85} $\pm 3.00$ & & 28.78 $\pm 0.18$ & 28.61 $\pm 0.16$ & & 28.73 $\pm 0.25$ & 28.88 $\pm 0.18$\\
8 & $(1, 10)$ & 3036.01 $\pm 507.41$ & \textbf{38.89} $\pm 7.41$ & & 2662.96 $\pm 361.97$ & \textbf{48.08} $\pm 10.54$ & & 31.95 $\pm 0.08$ & 31.95 $\pm 0.23$ & & 31.96 $\pm 0.20$ & 32.01 $\pm 0.21$\\
9 & $(1, 11)$ & 7200.00 $\pm 0.00$ & \textbf{85.99} $\pm 9.94$ & & 7200.00 $\pm 0.00$ & \textbf{122.28} $\pm 19.92$ & & 81.68 $\pm 16.65$ & \textbf{35.06} $\pm 0.29$ & & 74.56 $\pm 19.98$ & \textbf{35.10} $\pm 0.28$\\
10 & $(1, 12)$ & 7200.00 $\pm 0.00$ & \textbf{220.64} $\pm 43.34$ & & 7200.00 $\pm 0.00$ & \textbf{313.88} $\pm 48.81$ & & 87.85 $\pm 11.06$ & \textbf{38.26} $\pm 0.23$ & & 81.42 $\pm 16.54$ & \textbf{38.20} $\pm 0.25$\\
11 & $(1, 13)$ & 7200.00 $\pm 0.00$ & \textbf{490.90} $\pm 90.83$ & & 7200.00 $\pm 0.00$ & \textbf{692.75} $\pm 126.60$ & & 89.45 $\pm 12.88$ & \textbf{41.50} $\pm 0.40$ & & 89.54 $\pm 11.68$ & \textbf{41.43} $\pm 0.22$\\
12 & $(1, 14)$ & 7200.00 $\pm 0.00$ & \textbf{1153.32} $\pm 117.37$ & & 7200.00 $\pm 0.00$ & \textbf{1710.56} $\pm 278.99$ & & 87.26 $\pm 12.32$ & \textbf{44.69} $\pm 0.13$ & & 93.66 $\pm 10.10$ & \textbf{44.74} $\pm 0.27$\\
13 & $(1, 15)$ & 7200.00 $\pm 0.00$ & \textbf{2514.16} $\pm 337.59$ & & 7200.00 $\pm 0.00$ & \textbf{3639.92} $\pm 627.08$ & & 86.84 $\pm 11.41$ & \textbf{47.77} $\pm 0.17$ & & 85.14 $\pm 13.60$ & \textbf{47.74} $\pm 0.34$\\

    \bottomrule
    \end{tabular}
    
    \label{table:complete_data_schedule}
\end{table}


\begin{table}[t]
    \centering
    \scriptsize
    \caption{\small Our test setup for the Delicate Can$(c)$ domain (lower values better). ID refers to the problem ID in the test set. $\theta$ refers to the parameters passed to the problem generator for generating the problem. Times indicate the seconds required to find a policy. Similarly, costs are reported as average costs obtained by executing the computed policy for $100$ trials. We ran our experiments using a different random seed for 10 different runs and report average metrics up to one standard deviation. Better metrics are at least 5\% better and are indicated using bold font.}
    \begin{tabular}{rllllllllllll}
    \toprule
     &  & \multicolumn{2}{c}{Time$(x \equiv \text{LRTDP})$} & & \multicolumn{2}{c}{Time$(x \equiv \text{Soft-FLARES})$} & & \multicolumn{2}{c}{Cost$(x \equiv \text{LRTDP})$} & & \multicolumn{2}{c}{Cost$(x \equiv \text{Soft-FLARES})$} \\
    \cmidrule{3-4} \cmidrule{6-7} \cmidrule{9-10} \cmidrule{12-13}
    \multicolumn{1}{c}{ID} & \multicolumn{1}{c}{$\theta$}  & \multicolumn{1}{c}{$x$} & \multicolumn{1}{c}{Ours + $x$} & & \multicolumn{1}{c}{$x$} & \multicolumn{1}{c}{Ours + $x$} & & \multicolumn{1}{c}{$x$} & \multicolumn{1}{c}{Ours + $x$} & & \multicolumn{1}{c}{$x$} & \multicolumn{1}{c}{Ours + $x$} \\
    \midrule

0 & $(2)$ & 0.01 $\pm 0.01$ & 0.01 $\pm 0.00$ & & \textbf{0.00} $\pm 0.00$ & 0.01 $\pm 0.01$ & & 5.40 $\pm 0.07$ & 5.40 $\pm 0.08$ & & 5.45 $\pm 0.07$ & 5.40 $\pm 0.10$\\
1 & $(3)$ & 0.03 $\pm 0.01$ & \textbf{0.02} $\pm 0.01$ & & 0.03 $\pm 0.01$ & \textbf{0.02} $\pm 0.01$ & & 7.42 $\pm 0.12$ & 7.50 $\pm 0.15$ & & 7.53 $\pm 0.09$ & 7.48 $\pm 0.06$\\
2 & $(4)$ & 0.11 $\pm 0.03$ & \textbf{0.07} $\pm 0.02$ & & 0.12 $\pm 0.03$ & \textbf{0.09} $\pm 0.02$ & & 9.58 $\pm 0.11$ & 9.53 $\pm 0.16$ & & 9.56 $\pm 0.14$ & 9.62 $\pm 0.09$\\
3 & $(5)$ & 0.49 $\pm 0.08$ & \textbf{0.22} $\pm 0.05$ & & 0.51 $\pm 0.08$ & \textbf{0.40} $\pm 0.06$ & & 11.71 $\pm 0.20$ & 11.72 $\pm 0.12$ & & \textbf{12.48} $\pm 0.69$ & 14.22 $\pm 2.49$\\
4 & $(6)$ & 2.56 $\pm 0.55$ & \textbf{0.80} $\pm 0.13$ & & 2.14 $\pm 0.26$ & \textbf{1.79} $\pm 0.23$ & & 13.82 $\pm 0.15$ & 13.75 $\pm 0.07$ & & \textbf{15.02} $\pm 1.09$ & 25.16 $\pm 4.14$\\
5 & $(7)$ & 12.55 $\pm 2.20$ & \textbf{3.25} $\pm 0.56$ & & 8.08 $\pm 0.92$ & \textbf{6.81} $\pm 1.41$ & & 15.94 $\pm 0.09$ & 15.86 $\pm 0.08$ & & \textbf{18.87} $\pm 1.44$ & 23.32 $\pm 4.09$\\
6 & $(8)$ & 54.31 $\pm 11.04$ & \textbf{14.12} $\pm 2.60$ & & 31.86 $\pm 5.15$ & \textbf{27.65} $\pm 4.41$ & & 18.08 $\pm 0.13$ & 18.07 $\pm 0.14$ & & 22.16 $\pm 1.68$ & 22.01 $\pm 1.52$\\
7 & $(9)$ & 244.32 $\pm 47.46$ & \textbf{54.60} $\pm 10.68$ & & 101.63 $\pm 20.29$ & \textbf{90.57} $\pm 15.57$ & & 20.04 $\pm 0.13$ & 20.15 $\pm 0.14$ & & 25.18 $\pm 2.65$ & \textbf{22.70} $\pm 1.22$\\
8 & $(10)$ & 960.96 $\pm 105.78$ & \textbf{188.50} $\pm 29.21$ & & 331.06 $\pm 49.98$ & \textbf{293.54} $\pm 36.06$ & & 22.28 $\pm 0.17$ & 22.17 $\pm 0.13$ & & 28.54 $\pm 2.09$ & \textbf{26.29} $\pm 1.73$\\
9 & $(11)$ & 4691.23 $\pm 859.08$ & \textbf{698.09} $\pm 143.63$ & & 990.65 $\pm 144.48$ & 983.51 $\pm 144.08$ & & 24.30 $\pm 0.12$ & 24.41 $\pm 0.17$ & & 30.05 $\pm 1.37$ & \textbf{28.02} $\pm 1.40$\\
10 & $(12)$ & 7200.00 $\pm 0.00$ & \textbf{3769.57} $\pm 439.21$ & & 3347.02 $\pm 540.66$ & \textbf{2911.30} $\pm 397.41$ & & 27.46 $\pm 0.83$ & 26.46 $\pm 0.14$ & & 32.23 $\pm 2.35$ & \textbf{29.49} $\pm 1.47$\\
11 & $(12)$ & 6980.00 $\pm 660.00$ & 7183.58 $\pm 49.27$ & & 6980.48 $\pm 658.56$ & 7132.65 $\pm 202.04$ & & 96.62 $\pm 10.13$ & 96.12 $\pm 11.65$ & & 96.50 $\pm 10.49$ & 96.48 $\pm 10.56$\\

    \bottomrule
    \end{tabular}
    
    \label{table:complete_data_delicate_can}
\end{table}


\begin{table}[t]
    \centering
    \scriptsize
    \caption{\small Our test setup for the Gripper$(b)$ domain (lower values better). ID refers to the problem ID in the test set. $\theta$ refers to the parameters passed to the problem generator for generating the problem. Times indicate the seconds required to find a policy. Similarly, costs are reported as average costs obtained by executing the computed policy for $100$ trials. We ran our experiments using a different random seed for 10 different runs and report average metrics up to one standard deviation. Better metrics are at least 5\% better and are indicated using bold font.}
    \begin{tabular}{rllllllllllll}
    \toprule
     &  & \multicolumn{2}{c}{Time$(x \equiv \text{LRTDP})$} & & \multicolumn{2}{c}{Time$(x \equiv \text{Soft-FLARES})$} & & \multicolumn{2}{c}{Cost$(x \equiv \text{LRTDP})$} & & \multicolumn{2}{c}{Cost$(x \equiv \text{Soft-FLARES})$} \\
    \cmidrule{3-4} \cmidrule{6-7} \cmidrule{9-10} \cmidrule{12-13}
    \multicolumn{1}{c}{ID} & \multicolumn{1}{c}{$\theta$}  & \multicolumn{1}{c}{$x$} & \multicolumn{1}{c}{Ours + $x$} & & \multicolumn{1}{c}{$x$} & \multicolumn{1}{c}{Ours + $x$} & & \multicolumn{1}{c}{$x$} & \multicolumn{1}{c}{Ours + $x$} & & \multicolumn{1}{c}{$x$} & \multicolumn{1}{c}{Ours + $x$} \\
    \midrule

0 & $(1)$ & 0.01 $\pm 0.01$ & \textbf{0.00} $\pm 0.01$ & & \textbf{0.00} $\pm 0.00$ & 0.01 $\pm 0.01$ & & 3.25 $\pm 0.06$ & 3.25 $\pm 0.05$ & & 3.23 $\pm 0.06$ & 3.25 $\pm 0.04$\\
1 & $(2)$ & 0.02 $\pm 0.01$ & 0.02 $\pm 0.01$ & & 0.02 $\pm 0.01$ & 0.02 $\pm 0.01$ & & 5.48 $\pm 0.08$ & 5.53 $\pm 0.08$ & & 5.51 $\pm 0.10$ & 5.55 $\pm 0.07$\\
2 & $(3)$ & \textbf{0.10} $\pm 0.02$ & 0.12 $\pm 0.03$ & & 0.12 $\pm 0.03$ & 0.12 $\pm 0.03$ & & 9.71 $\pm 0.07$ & 9.76 $\pm 0.10$ & & 9.76 $\pm 0.07$ & 9.75 $\pm 0.09$\\
3 & $(4)$ & 0.33 $\pm 0.05$ & 0.34 $\pm 0.05$ & & \textbf{0.30} $\pm 0.06$ & 0.38 $\pm 0.07$ & & 11.99 $\pm 0.10$ & 12.03 $\pm 0.16$ & & 12.00 $\pm 0.08$ & 12.03 $\pm 0.10$\\
4 & $(5)$ & \textbf{1.36} $\pm 0.23$ & 1.62 $\pm 0.28$ & & \textbf{1.82} $\pm 0.35$ & 2.21 $\pm 0.47$ & & 16.27 $\pm 0.09$ & 16.23 $\pm 0.12$ & & 16.22 $\pm 0.06$ & 16.26 $\pm 0.15$\\
5 & $(6)$ & \textbf{3.38} $\pm 0.39$ & 5.17 $\pm 1.65$ & & \textbf{4.64} $\pm 0.80$ & 5.16 $\pm 0.64$ & & 18.51 $\pm 0.21$ & 18.54 $\pm 0.14$ & & 18.53 $\pm 0.12$ & 18.44 $\pm 0.07$\\
6 & $(7)$ & \textbf{14.00} $\pm 2.83$ & 17.67 $\pm 3.43$ & & \textbf{18.07} $\pm 2.13$ & 22.28 $\pm 4.48$ & & 22.82 $\pm 0.08$ & 22.78 $\pm 0.10$ & & 22.80 $\pm 0.19$ & 22.73 $\pm 0.15$\\
7 & $(8)$ & \textbf{34.11} $\pm 7.52$ & 38.35 $\pm 4.68$ & & \textbf{43.88} $\pm 8.90$ & 55.78 $\pm 12.25$ & & 24.96 $\pm 0.12$ & 24.99 $\pm 0.13$ & & 24.96 $\pm 0.14$ & 24.95 $\pm 0.14$\\
8 & $(9)$ & \textbf{108.94} $\pm 19.36$ & 133.06 $\pm 24.75$ & & \textbf{175.48} $\pm 33.68$ & 186.91 $\pm 24.64$ & & 29.17 $\pm 0.10$ & 29.26 $\pm 0.20$ & & 29.23 $\pm 0.14$ & 29.08 $\pm 0.17$\\
9 & $(10)$ & \textbf{264.59} $\pm 28.50$ & 362.16 $\pm 60.65$ & & \textbf{392.91} $\pm 69.13$ & 463.30 $\pm 78.81$ & & 31.48 $\pm 0.20$ & 31.54 $\pm 0.15$ & & 31.47 $\pm 0.14$ & 31.54 $\pm 0.17$\\
10 & $(11)$ & \textbf{867.91} $\pm 195.44$ & 1028.70 $\pm 172.62$ & & \textbf{1256.52} $\pm 277.94$ & 1509.25 $\pm 195.14$ & & 35.78 $\pm 0.09$ & 35.73 $\pm 0.15$ & & 35.70 $\pm 0.15$ & 35.84 $\pm 0.16$\\
11 & $(12)$ & \textbf{2037.65} $\pm 326.63$ & 2454.41 $\pm 377.38$ & & \textbf{2941.24} $\pm 457.95$ & 3461.67 $\pm 595.74$ & & 38.07 $\pm 0.13$ & 37.98 $\pm 0.15$ & & 38.03 $\pm 0.18$ & 38.10 $\pm 0.12$\\

    \bottomrule
    \end{tabular}
    
    \label{table:complete_data_gripper}
\end{table}

\end{landscape}

\end{document}